\def\name{NeuMiss }
\def\toptitlebar{\hrule height4pt \vskip .25in}
\def\bottomtitlebar{ \vskip .25in \hrule height1pt \vskip .25in}
\def\mytitle{\name networks: differentiable programming for supervised learning with missing values}
\title{\mytitle{}}
\date{}
\author{Marine Le Morvan\textsuperscript{\textnormal{1,2}}\hspace{0.5em}
Julie Josse\textsuperscript{\textnormal{1,3}} \hspace{0.5em}
Thomas Moreau\textsuperscript{\textnormal{1}}\hspace{0.5em}
Erwan Scornet\textsuperscript{\textnormal{3}}\hspace{0.5em}
Gaël Varoquaux\textsuperscript{\textnormal{1, 4}}
\AND
\\[-1.5em]
\textsuperscript{1}  Université Paris-Saclay, Inria, CEA, Palaiseau, 91120, France\\
\textsuperscript{2}  Université Paris-Saclay, CNRS/IN2P3, IJCLab, 91405 Orsay, France\\
\textsuperscript{3}  CMAP, UMR7641, Ecole Polytechnique, IP Paris, 91128 Palaiseau, France\\
\textsuperscript{4}  Mila, McGill University, Montréal, Canada
\AND
\\[-1.5em]
\texttt{\{marine.le-morvan, julie.josse, thomas.moreau, gael.varoquaux\}@inria.fr}\\
\texttt{erwan.scornet@polytechnique.edu}
}
\newtheorem{assumption}{Assumption}
\newtheorem{lemma}{Lemma}
\def\toptitlebar{
\hrule height4pt
\vskip .25in}
\def\bottomtitlebar{
\vskip .25in
\hrule height1pt
\vskip .25in}
\newcommand{\RR}{\mathbb{R}}
\newcommand{\E}{\mathbb{E}}
\newcommand{\Ncal}{\mathcal{N}}
\renewcommand{\P}{\mathds{P}}
\newcommand{\NA}{\mathtt{NA}}
\newcommand{\bM}{M}
\newcommand{\bbm}{m}
\newcommand{\Xm}{\widetilde{X}}
\newcommand{\bXm}{\widetilde{X}}
\newcommand{\bx}{x}
\newcommand{\bX}{X}
\newcommand{\bxm}{\widetilde{x}}
\newcommand{\regTilde}{f^{\star}_{\bXm}}
\newcommand{\regTildel}{f^{\star}_{\bXm, \ell}}
\newcommand{\dataset}{\mathcal{D}}
\newcommand{\tmu}{\widetilde{\mu}}
\newcommand{\tsigma}{\widetilde{\sigma}}
\newcommand {\br}[1]{\left(#1\right)}
\newcommand {\sqb}[1]{\left[#1\right]}
\newcommand {\cbr}[1]{\left\{#1\right\}}
\newcommand {\bbr}[1]{\left\llbracket#1\right\rrbracket}
\newcommand{\gv}[1]
{{\todo[author=GV, color=green!20]{#1}}}
\theoremstyle{plain}\newtheorem{proposition}{Proposition}[section]
\begin{document}

\maketitle

\begin{abstract}
The presence of missing values makes supervised learning much more challenging. Indeed, previous work has shown that even when the response is a linear function of the complete data, the optimal predictor is a complex function of the observed entries and the missingness indicator. As a result, the computational or sample complexities of consistent approaches depend on the number of missing patterns, which can be exponential in the number of dimensions. In this work, we derive the analytical form of the optimal predictor under a linearity assumption and various missing data mechanisms including Missing at Random (MAR) and self-masking (Missing Not At Random). Based on a Neumann-series approximation of the optimal predictor, we propose a new \textbf{principled} architecture, named \name networks. Their originality and strength come from the use of a new type of non-linearity: the multiplication by the missingness indicator. We provide an upper bound on the Bayes risk of \name networks, and show that they have good predictive accuracy with both a number of parameters and a computational complexity independent of the number of missing data patterns. As a result they \textbf{scale well} to problems with many features, and remain \textbf{statistically efficient} for medium-sized samples. Moreover, we show that, contrary to procedures using EM or imputation, they are \textbf{robust to the missing data mechanism}, including difficult MNAR settings such as self-masking.
\end{abstract}

%\tableofcontents
%\pagebreak

\section{Introduction}

Increasingly complex data-collection pipelines, often assembling
multiple sources of information, lead to datasets with incomplete
observations and complex missing-values mechanisms. The pervasiveness of missing values
has triggered an abundant statistical literature on the subject
\citep{little2002statistical,vanbuuren_2018}: a recent survey reviewed more than 150 implementations to handle missing data \citep{mayer2019rmisstastic}.
Nevertheless, most methods have been developed either for
inferential purposes, i.e. to estimate parameters of a probabilistic
model of the fully-observed data, or for imputation, completing missing entries as well as possible \citep{hastie2015softimpute}.
%Also, controlling inferences on the fully-observed data
These methods often require strong assumptions on the missing-values
mechanism, i.e. either the missing at random (MAR) assumption \citep{rubin1976inference} --
the probability of being missing only depends on observed values -- or the more restrictive Missing Completely At Random assumption (MCAR) -- the missingness is independent of the data.
In MAR or MCAR settings, good imputation is sufficient to fit statistical
models, or even train supervised-learning models
\citep{josse2019consistency}. In particular, a precise knowledge of the
data-generating mechanism can be used to derive an Expectation
Maximization (EM) \citep{dempster1977maximum} formulation with the
minimum number of necessary parameters. Yet, as we will see, this is
intractable if the number of features is not small, as potentially $2^d$
missing-value patterns must be modeled.

The last missing-value mechanism category,
Missing Not At Random (MNAR), covers cases where the
probability of being missing depends on the unobserved values.
This is a frequent situation in which
%introduces a bias in the fully-observed data distribution:
missingness cannot be ignored in the
statistical analysis \citep{kim2018data}.
Much of the work on MNAR data focuses on problems of identifiability, in both parametric  and non-parametric settings \citep{tang2003analysis, miao2016identifiability, mohan:pea19-r473, nabi2020full}.
In MNAR settings, estimation strategies often require modeling
the missing-values mechanism \citep{ibrahim1999missing}. This
complicates the inference task and is often limited to cases with few MNAR variables.
% TM : need the masking matrix to be well approximated with low-rank matrices.
Other approaches need the masking matrix to be well approximated
with low-rank matrices \citep{marlin2009collaborative, audibert2011robust, hernandez2014probabilistic, ma2019missing, wang2019doubly}.

Supervised learning with missing values has different goals than
probabilistic modeling \citep{josse2019consistency} and has been less
studied. As the test set is also expected to have missing entries,
optimality on the fully-observed data is no longer a goal per se. Rather, the
goal of minimizing an expected risk lend itself well to
non-parametric models which can compensate from some oddities introduced
by missing values. Indeed, with a powerful learner capable of learning
any function, imputation by a constant is Bayes consistent
\citep{josse2019consistency}. Yet, the complexity of this function that
must be approximated governs the success of this approach
outside of asymptotic regimes. In the simple case of a linear regression
with missing values, the optimal predictor has a combinatorial
expression: for $d$ features, there are $2^d$ possible missing-values
patterns requiring $2^d$ models \cite{morvan2020linear}.

\citet{morvan2020linear} showed that in this setting, a multilayer
perceptrons (MLP) can be consistent even in a pattern mixture
MNAR model, but assuming $2^d$ hidden units.
%
%\cite{morvan2020linear}'s work therefore suggested that to overcome the difficulty of learning with missing data, a neural network may be appropriate even when the initial data generation model is linear.
There have been many adaptations of neural networks to missing values,
often involving an imputation with 0's and concatenating the mask (the
indicator matrix coding for missing values)
\citep{nazabal2018handling, pmlr-v97-mattei19a, ma2018eddi,
yoon2018gain, pmlr-v118-gong20a}. However there is no theory relating the
network architecture to the impact of the missing-value mechanism on the
prediction function. In particular, an important practical question is:
how complex should the architecture be to cater for a given mechanism?
Overly-complex architectures require a lot of data, but being too
restrictive will introduce bias for missing values.

The present paper addresses the challenge of supervised learning with missing values.
We propose a theoretically-grounded neural-network architecture which allows
to implicitly impute values as a function of the observed data, aiming at
the best prediction. More precisely,
        \begin{itemize}[itemsep=0ex, topsep=0ex, partopsep=0ex, parsep=0.5ex,
		leftmargin=2.1ex]
            \item We derive an analytical expression of the Bayes predictor for linear regression in the presence of missing values under various missing data mechanisms including MAR and self-masking MNAR.

            \item We propose a new \textbf{principled} architecture, named \name network, based on a Neumann series approximation of the Bayes predictors, whose originality and strength is the use of $\odot \bM$ nonlinearities, i.e. the elementwise multiplication by the missingness indicator.

            \item We provide an upper bound on the Bayes risk of \name networks which highlights the benefits of depth and learning to approximate.

            \item We provide an interpretation of a classical ReLU network as a shallow \name network. We further demonstrate empirically the crucial role of the $\odot$ nonlinearities,
            %by providing a theoretical and empirical comparison with RELU ones fed with data imputed by 0 and concatenated with the mask.  It highlights
            by showing that increasing the capacity of \name networks improves predictions while it does not for classical networks.

            \item We show that \name networks are \textbf{suited
medium-sized datasets}: they require $O(d^2)$ samples,
contrary to $O(2^d)$ for methods that do not share weights between missing data patterns.

            \item We demonstrate the benefits of the proposed
architecture over classical methods such as EM algorithms or iterative
conditional imputation \citep{vanbuuren_2018} both in terms of
computational complexity --these methods scale in $O(2^d d^2)$
\citep{seber2003wiley} and  $O( d^3)$ respectively--, and in the ability to be \textbf{robust to the missing data mechanism}, including MNAR.

            % \item The proposed simple architecture  can be used as a building block in any network and provides guidance on an appropriate strategy to deal with missing  values in supervised learning: implicitly imputing values as a function of observed data aiming at the best prediction.

         %   \item Our proposed architecture outperforms competing methods, including the MLP, on simulations.
        \end{itemize}

%To add/
%\begin{itemize}
%\itemsep 0.1em

 %   \item The limitations of imputation prior to learning.

  % \item share parameter
  % \item Xobs et mask moins melange. Fonction de Xobs. Imputation constante, pas des fonctions de xobs. depend de Xobs mais vise a prevoir y, mnar, supervise.

  %  \item The difficulty of learning in the presence of missing values. Being able to handle any arbitrary set of observed data, and to share parameters across missing data patterns. Ex: Random Forest, EM, EDDI. EDDI: work in the context of VAEs; a neural network learns a fixed size representation of the samples, by taking as inputs the list of variable IDs and their values, and mapping them to a latent vector using a permutation invariant function such as summing or maxpooling.
  %  \item The case of linear regression (AIstats paper). A simple pb that is surpringly diffcicult. Tell about ExpandedLR and MLP. Limitations: ExpandedLR requires $O(2^d)$ samples. MLP is not principled when number of hidden units is less than $2^d$.
  % \end{itemize}

\section{Optimal predictors in the presence of missing values}

\paragraph{Notations}

%In presence of missing values, we do not observe a complete vector $\bX$.
%To define precisely the observed quantity,
We consider a data set $\dataset_n = \{(\bX_1, Y_1), \hdots, (\bX_n, Y_n)\}$ of independent pairs $(\bX_i, Y_i)$, distributed as the generic pair $(\bX, Y)$, where $\bX \in \RR^d$ and $Y \in \RR$. We introduce the indicator
vector $\bM \in \{0,1\}^d$ which satisfies, for all $1 \leq j \leq d$,
$M_j =1$ if and only if $X_j$ is not observed. The random vector $\bM$
acts as a mask on $\bX$. %To formalize incomplete observations,
%in a fixed-size space,
We define the incomplete feature vector $\bXm \in \widetilde{\mathcal X} =  (\RR \cup \{\NA\})^d$ (see
\cite{rubin1976inference}, \cite[appendix B]{rosenbaum_rubin_JASA1984}) %\citealt{mohan2018graphical, yoon2018gain})
as $\Xm_{j}=\NA$ if $M_{j}=1$, and  $\Xm_{j}=X_j$ otherwise.
%We have
%$$\bXm = \bX\odot(\mathbf 1 - \bM) + \NA\odot\bM,$$
%where $\odot$ is the term-by-term product, with the convention that, for all one-dimensional $x$, $\NA\cdot x=\NA.$
As such, $\bXm$ is a mixed categorical and continuous variable. An
example of realization (lower-case letters) of the previous random
variables would be a vector $\bx = (1.1, 2.3, -3.1, 8, 5.27)$ with the
missing pattern $m = (0,1,0,0,1)$, giving
$
\bxm = (1.1, ~~\texttt{NA}, ~-3.1, ~~8, ~~\texttt{NA}).
$

For realizations $\bbm$ of $\bM$, we also denote by
$obs(\bbm)$ (resp. $mis(\bbm)$) the indices of the zero entries of $m$
(resp. non-zero). Following classic missing-value notations, we let
$\bX_{obs(\bM)}$ (resp. $\bX_{mis(\bM)}$)
be the observed (resp. missing) entries in $\bX$. Pursuing
the above example, we have $mis(\bbm)=\{1, 4\}$, $obs(\bbm)=\{0, 2, 3\}$,
$\bx_{obs(\bbm)} = (1.1, -3.1, ~~8)$, $\bx_{mis(\bbm)} = (  2.3,
~~5.27)$.
To lighten notations, when there is no ambiguity, we remove the explicit dependence in $\bbm$ and write, e.g.,
$\bX_{obs}$.

\subsection{Problem statement: supervised learning with missing values}

We consider a linear model of the complete data, such that the response $Y$ satisfies:
\begin{align}
    Y &= \beta_0^{\star} + \langle \bX, \beta^{\star} \rangle + \varepsilon,
    \qquad
\text{for some}\; \beta_0^\star \in \mathbb{R}, \beta^\star \in
\mathbb{R}^d, \;\text{and}\; \varepsilon\sim\mathcal N(0, \sigma^2).
\label{eq:predictor}
\end{align}
Prediction with missing values departs from standard linear-model
settings: the aim is
to predict $Y$ given $\bXm$, as the complete input $\bX$ may be unavailable. The corresponding optimization problem is:
\begin{align}
    \regTilde \in \underset{f:\widetilde{\mathcal X} \rightarrow \mathbb R}{\mathrm{argmin}}~
    \E[(Y - f(\bXm))^2], \label{eq_optimisation_pb}
\end{align}
where $\regTilde$ is the Bayes predictor for the squared loss, in the presence of missing values. The main difficulty of this problem comes from the half-discrete nature of the input space $\widetilde{\mathcal X}$. Indeed, the Bayes predictor $\regTilde(\bXm) = \E \big[Y~|~\bXm \big]$ can be rewritten as:
\begin{equation}
 \label{eq:bp_submodels}
    \regTilde(\bXm) = \E \left[Y~|~\bM, \bX_{obs(\bM)}\right] =  \sum_{\bbm\in\{0,1\}^d}
\E\left[Y | \bX_{obs(\bbm)}, \bM=\bbm\right] ~ \mathds{1}_{\bM=\bbm},
\end{equation}
which highlights the combinatorial issue of solving
\eqref{eq_optimisation_pb}: one may need to
optimize $2^d$ submodels, for the different $m$.
%which may not be linear \cite[Example~3.1]{morvan2020linear}.
In the
following,
we write the Bayes predictor $f^{\star}$ as a function of
$(X_{obs(M)},M)$:
\begin{align*}
f^{\star}(X_{obs(M)},M) = \E \left[Y | \bX_{obs(M)}, \bM\right].  \label{eq:Bayes_predictor_xobs}
\end{align*}

\subsection{Expression of the Bayes predictor under various
missing-values mechanisms}
\label{sec:theory_Bayes}

%Lemma~\ref{lem:bayes_predictor} gives
%the general expression of the Bayes predictor in a linear model in
%presence of missing values.

% \begin{lemma}[General expression of the Bayes predictor]
% \label{lem:bayes_predictor}
% Assume that the data are generated via the linear model defined in equation~\eqref{eq:predictor}, then the Bayes predictor takes the form
% \begin{equation}
%     f^{\star} (X_{obs(M)},M) = \beta_0^{\star} + \langle \beta_{obs(M)}^{\star}, X_{obs(M)} \rangle  + \langle \beta^{\star}_{mis(M)},  \E[X_{mis(M)}|M, X_{obs(M)}] \rangle,
%     \label{eq:bayespredictgeneral}
% \end{equation}
% where
% ($\beta_{obs(M)}^{\star}, \beta_{mis(M)}^{\star}$) correspond to the decomposition of the regression coefficients in observed and missing elements.
% \end{lemma}

% The crucial step to compute the Bayes predictor is computing the conditional expectation $\E[X_{mis(M)} | M, X_{obs(M)}]$, or in other words, $\E [X_{j} | M,
% X_{obs(M)}]$ for all $j \in mis(M)$. There is no general closed-form
% expression, as the previous expectation depends on the data distribution and the missingness mechanism. However, an exact expression can be derived for Gaussian data with certain missing-value mechanisms.

There is no general closed-form expression for the Bayes predictor, as it depends on the data distribution and missingness mechanism. However, an exact expression can be derived for Gaussian data with various missingness mechanisms.

\begin{assumption}[Gaussian data]\label{ass:gaussian}
The distribution of $\bX$ is Gaussian, that is, $\bX \sim \mathcal{N} (\mathbf{\mu}, \Sigma)$.
\end{assumption}

\begin{assumption}[MCAR mechanism]
\label{ass:mcar}
For all $\bbm \in \{0,1\}^d$, $P(\bM = \bbm|\bX) = P(\bM = m)$.
\end{assumption}

\begin{assumption}[MAR mechanism]
\label{ass:mar}
For all $\bbm \in \{0,1\}^d$, $P(\bM = \bbm|\bX) = P(\bM = \bbm|\bX_{obs(\bbm)})$.
\end{assumption}

\begin{restatable}[MAR Bayes predictor]{proposition}{MARbp}
\label{prop:MCAR_MAR}
Assume that the data are generated via the linear model defined in equation~\eqref{eq:predictor} and satisfy Assumption~\ref{ass:gaussian}. Additionally, assume that either Assumption~\ref{ass:mcar} or Assumption~\ref{ass:mar} holds. Then the Bayes predictor $f^{\star}$ takes the form
\begin{equation}
    f^{\star}(X_{obs}, M) =  \beta_0^{\star} + \langle \beta_{obs}^{\star},  \bX_{obs} \rangle + \langle \beta_{mis}^{\star}, \mu_{mis} + \Sigma_{mis, obs} (\Sigma_{obs})^{-1} (X_{obs}-\mu_{obs}) \rangle,
    \label{eq:bp_MCAR}
\end{equation}
where we use $obs$ (resp. $mis$) instead of $obs(M)$ (resp. $mis(M)$) for lighter notations.
%where
%$ \mu_{obs},  \Sigma_{obs}, \beta_{obs}$ (respectively $\mu_{mis}, \Sigma_{mis, obs}, \beta_{mis}$) correspond to the decomposition of the mean vector, the covariance matrix and the regression coefficients in observed (respectively missing) elements.
\end{restatable}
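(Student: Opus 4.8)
The plan is to argue pattern by pattern. Fix a realization $\bbm \in \{0,1\}^d$ and compute $\E[Y \mid \bX_{obs(\bbm)}, \bM = \bbm]$; recombining over $\bbm$ via \eqref{eq:bp_submodels} then gives $f^\star$ as a function on $\widetilde{\mathcal X}$. First I would substitute the linear model \eqref{eq:predictor} and split $\langle \bX, \beta^\star\rangle = \langle \bX_{obs}, \beta^\star_{obs}\rangle + \langle \bX_{mis}, \beta^\star_{mis}\rangle$. Since the noise $\varepsilon$ is independent of $\bX$ (and of any exogenous randomness used to draw $\bM$), we have $\E[\varepsilon \mid \bX_{obs}, \bM] = 0$; and $\bX_{obs}$ is measurable with respect to the conditioning $\sigma$-algebra. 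Hence
\[
f^\star(\bX_{obs}, \bM) = \beta_0^\star + \langle \beta^\star_{obs}, \bX_{obs}\rangle + \langle \beta^\star_{mis}, \E[\bX_{mis} \mid \bX_{obs}, \bM]\rangle,
\]
and it only remains to identify $\E[\bX_{mis(\bbm)} \mid \bX_{obs(\bbm)}, \bM = \bbm]$.

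The key step is to show that, under Assumption~\ref{ass:mcar} or Assumption~\ref{ass:mar}, conditioning \emph{in addition} on $\{\bM = \bbm\}$ does not change the conditional law of $\bX_{mis(\bbm)}$ given $\bX_{obs(\bbm)}$. Working on the positive-probability event $\{\bM = \bbm\}$ and applying Bayes' rule to the relevant densities,
\[
p(\bx_{mis} \mid \bx_{obs}, \bM = \bbm) = \frac{\P(\bM = \bbm \mid \bx_{mis}, \bx_{obs})\, p(\bx_{mis}, \bx_{obs})}{\P(\bM = \bbm \mid \bx_{obs})\, p(\bx_{obs})}.
\]
Assumption~\ref{ass:mar} states exactly that $\P(\bM = \bbm \mid \bx_{mis}, \bx_{obs}) = \P(\bM = \bbm \mid \bx_{obs})$ (and Assumption~\ref{ass:mcar} is the stronger statement that this equals $\P(\bM = \bbm)$), so the mechanism factors cancel and the right-hand side reduces to $p(\bx_{mis} \mid \bx_{obs})$. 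Therefore $\E[\bX_{mis} \mid \bX_{obs}, \bM = \bbm] = \E[\bX_{mis} \mid \bX_{obs}]$.

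Finally I would invoke the classical Gaussian conditioning formula: under Assumption~\ref{ass:gaussian} (with $\Sigma$ invertible, so that $\Sigma_{obs}$ is invertible for every pattern, the fully-missing pattern being the trivial case where the formula returns $\mu$), the conditional law of $\bX_{mis}$ given $\bX_{obs}$ is Gaussian with mean $\mu_{mis} + \Sigma_{mis, obs}(\Sigma_{obs})^{-1}(\bX_{obs} - \mu_{obs})$. Substituting this into the first display yields \eqref{eq:bp_MCAR}. The only non-mechanical point — hence the main obstacle — is the ignorability argument of the second paragraph: one must be careful that $\bM$ may carry its own exogenous randomness (so that $\varepsilon \perp \bM$ needs the noise to be independent of that randomness too, which is the standing assumption of the model), and that the density manipulation is legitimate even when $\bX$ has a continuous component; the Gaussian substitution and the recombination over patterns are then routine.
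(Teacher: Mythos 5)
Your proposal is correct and follows essentially the same route as the paper: reduce to $\E[X_{mis}\mid M, X_{obs}]$ via the linear model, cancel the missingness-mechanism factor from numerator and denominator under Assumption~\ref{ass:mcar} or \ref{ass:mar} to get $\E[X_{mis}\mid M, X_{obs}]=\E[X_{mis}\mid X_{obs}]$, and conclude with the Gaussian conditional-mean formula. The only cosmetic difference is that the paper carries out the density cancellation coordinate-wise for each $j\in mis$ (integrating out $X_{mis\setminus\{j\}}$) whereas you work with the full conditional law of $X_{mis}$ at once; the argument is otherwise identical.
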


Obtaining the Bayes predictor expression turns out to be far more complicated for general MNAR settings but feasible for the Gaussian self-masking mechanism described below.

%In a MNAR setting, it is not possible to simplify equation~\ref{eq:general_factorisation}. Thus in order to obtain an analytical expression for the Bayes predictor, it is necessary to have an analytical expression for the integral. Under Assumption~\ref{ass:gaussian} (Gaussian data), we know that $P(X_{mis^\prime} | X_j, X_{obs})$ is a multivariate Gaussian function of $X_{miss'}$. To have a chance of finding an analytical expression for the Bayes predictor, one must therefore choose $P(M|X)$ so that the integral of its product with a multivariate Gaussian can be computed analytically. It leaves a restricted number of choices among which another multivariate Gaussian is the simplest option.
\begin{assumption}[Gaussian self-masking]\label{ass:selfmasked_gaussian}
The missing data mechanism is self-masked with $P(M|X) = \prod_{k=1}^d P(M_k|X_k)$ and $\forall k \in \bbr{1, d},$
$$ P(M_k = 1|X_k) = K_k \exp\br{-\frac{1}{2} \frac{(X_k -
\tmu_k)^2}{\tsigma_k^2}}\qquad
\text{with}\; 0<K_k<1.
$$
\end{assumption}

%Proposition~\ref{prop:sm} below is the same as Proposition~\ref{prop:MCAR_MAR} but for a Gaussian self-masking data mechanism, which is a special instance of MNAR.

%Proposition~\ref{prop:sm} states the expression of the Bayes predictor for Gaussian self-masking data.

\begin{restatable}[Bayes predictor with Gaussian self-masking]{proposition}{GaussianSMbp}
\label{prop:sm}
Assume that the data are generated via the linear model defined in equation~\eqref{eq:predictor} and satisfy Assumption~\ref{ass:gaussian} and Assumption~\ref{ass:selfmasked_gaussian}.  Let $
\Sigma_{mis|obs} = \Sigma_{mis, mis} - \Sigma_{mis, obs} \Sigma_{obs}^{-1} \Sigma_{obs, mis},$ and let $D$ be the diagonal matrix such that $\mathrm{diag}(D) = (\tsigma_1^2, \hdots, \tsigma_d^2)$. Then the Bayes predictor writes
\begin{align}
 \label{eq:bp_MNAR}
    f^{\star}(X_{obs},M)  & =  \beta_0^{\star} + \langle \beta_{obs}^{\star},  \bX_{obs} \rangle + \langle \beta_{mis}^{\star},  (Id + D_{mis}\Sigma_{mis| obs}^{-1})^{-1} \nonumber \\
      & \quad \times
   ( \tilde{\mu}_{mis} + D_{mis}\Sigma_{mis| obs}^{-1} (\mu_{mis} + \Sigma_{mis, obs} \br{\Sigma_{obs}}^{-1}\br{X_{obs}-\mu_{obs}}))
\rangle
\end{align}
\end{restatable}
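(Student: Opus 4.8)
The plan is to use the linearity of the model to reduce the statement to the computation of $\E[\bX_{mis}\mid\bX_{obs},\bM]$, and then to evaluate this conditional mean as the mean of a product of two Gaussian densities. First, since $\varepsilon$ is independent of $(\bX,\bM)$ and $\bX_{obs}$ is measurable with respect to $(\bX_{obs},\bM)$, taking conditional expectations in \eqref{eq:predictor} gives $f^{\star}(X_{obs},M)=\beta_0^{\star}+\langle\beta_{obs}^{\star},\bX_{obs}\rangle+\langle\beta_{mis}^{\star},\E[\bX_{mis}\mid\bX_{obs},\bM]\rangle$, so the whole content of the statement is the identification of $\E[\bX_{mis}\mid\bX_{obs}=x_{obs},\bM=\bbm]$ for each fixed pattern $\bbm$ with $\P(\bM=\bbm)>0$.

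Next I would write this conditional mean through Bayes' formula at the level of densities. The conditional density of $\bX_{mis}$ given $\bX_{obs}=x_{obs}$ and $\bM=\bbm$ is proportional, as a function of $x_{mis}$, to $p(x_{mis}\mid x_{obs})\,\P(\bM=\bbm\mid\bX=x)$. By Assumption~\ref{ass:selfmasked_gaussian}, $\P(\bM=\bbm\mid\bX=x)$ factorizes into a product over $k\in obs$ of $\P(M_k=0\mid X_k=x_k)$, which is constant once $x_{obs}$ is fixed, times a product over $k\in mis$ of $K_k\exp\!\br{-\tfrac12(x_k-\tmu_k)^2/\tsigma_k^2}$. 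Absorbing the constants $K_k$ and the observed factors into the normalization, the conditional density is proportional to $p(x_{mis}\mid x_{obs})\exp\!\br{-\tfrac12(x_{mis}-\tmu_{mis})^{\top}D_{mis}^{-1}(x_{mis}-\tmu_{mis})}$, i.e. to the product of the Gaussian density $p(\cdot\mid x_{obs})$ and a $\Ncal(\tmu_{mis},D_{mis})$ density.

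The final step is the product-of-Gaussians identity together with a matrix rearrangement. Under Assumption~\ref{ass:gaussian}, $p(\cdot\mid x_{obs})$ is the density of $\Ncal\br{\mu_{mis|obs},\Sigma_{mis|obs}}$ with $\mu_{mis|obs}=\mu_{mis}+\Sigma_{mis,obs}\Sigma_{obs}^{-1}(x_{obs}-\mu_{obs})$, so the conditional law of $\bX_{mis}$ is Gaussian with precision $\Sigma_{mis|obs}^{-1}+D_{mis}^{-1}$ and mean $\br{\Sigma_{mis|obs}^{-1}+D_{mis}^{-1}}^{-1}\br{\Sigma_{mis|obs}^{-1}\mu_{mis|obs}+D_{mis}^{-1}\tmu_{mis}}$. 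Using $\Sigma_{mis|obs}^{-1}+D_{mis}^{-1}=D_{mis}^{-1}\br{Id+D_{mis}\Sigma_{mis|obs}^{-1}}$ and the analogous factorization of the right-hand vector, this mean equals $\br{Id+D_{mis}\Sigma_{mis|obs}^{-1}}^{-1}\br{\tmu_{mis}+D_{mis}\Sigma_{mis|obs}^{-1}\mu_{mis|obs}}$, which, after substituting the expression of $\mu_{mis|obs}$, is exactly the bracketed factor multiplying $\beta_{mis}^{\star}$ in \eqref{eq:bp_MNAR}. Plugging this into the identity of the first step yields the claim.

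I do not expect a genuine obstacle here; the points needing care are (i) checking that everything independent of $x_{mis}$ — the $K_k$, the observed masking factors, and all Gaussian normalizing constants — is legitimately swallowed by the proportionality, and (ii) the elementary matrix algebra of the last step, which relies on $D_{mis}$ being invertible since each $\tsigma_k^2>0$ (and on $\Sigma$, hence $\Sigma_{mis|obs}$, being invertible). It is also worth remarking that, unlike the MAR predictor of Proposition~\ref{prop:MCAR_MAR}, this one does not reduce to substituting the conditional expectation $\mu_{mis|obs}$ for the missing entries: the extra factor $\br{Id+D_{mis}\Sigma_{mis|obs}^{-1}}^{-1}$ shrinks the implicit imputation toward the self-masking location $\tmu_{mis}$.
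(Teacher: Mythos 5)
Your proposal is correct and follows essentially the same route as the paper's proof: reduce to $\E[\bX_{mis}\mid\bX_{obs},\bM]$ by linearity, use the self-masking factorization so that the observed-coordinate masking terms cancel, and apply the product-of-two-Gaussians identity (the paper's Lemma~\ref{lem:product}) followed by the factorization $\Sigma_{mis|obs}^{-1}+D_{mis}^{-1}=D_{mis}^{-1}(Id+D_{mis}\Sigma_{mis|obs}^{-1})$. The only cosmetic difference is that the paper writes the argument coordinate-wise for $P(X_j\mid M,X_{obs})$ with explicit marginalization integrals, whereas you work directly with the joint conditional density of $\bX_{mis}$, which is if anything cleaner.
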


The proof of Propositions~\ref{prop:MCAR_MAR} and \ref{prop:sm} are
in the Supplementary Materials (\ref{app:sec:proof_prop1} and \ref{app:proof_prop_mnar}). These are the first results establishing
exact expressions of the Bayes predictor in a MAR and specific MNAR
mechanisms.
%The proof of Proposition is given in Appendix~\ref{app:proof_prop_mnar}.
Note that these propositions show that the Bayes predictor is linear by
pattern under the assumptions studied, i.e., each of the $2^d$ submodels
in equation \ref{eq:bp_submodels} are linear functions of $X_{obs}$. For
non-Gaussian data, the Bayes predictor may not be
linear by pattern \cite[Example~3.1]{morvan2020linear}.

\paragraph{Generality of the Gaussian self-masking model}
For a self-masking mechanism where
the probability of being missing increases (or decreases) with the value
of the underlying variable, probit or logistic functions are often
used \citep{kim2018data}. A Gaussian self-masking model is also
a suitable model: setting the mean of the Gaussian close to the
extreme values gives a similar behaviour. In addition, it covers cases where the
probability of being missing is centered around a given value.

% \paragraph{Differences between MNAR and M(C)AR predictors}
% % Assuming the approximation $\br{\Sigma_{mis| obs} D_{mis}^{-1}}^{-1} \approx Id$, then the Bayes predictor for Gaussian masking \eqref{eq:bp_MNAR} becomes
% % \begin{align*}
% %     f^{\star} (X_{obs},M) & \approx \beta_0^{\star}+ \langle \beta_{obs}^{\star} , X_{obs}\rangle  + \Big\langle \beta_{mis}^{\star} ,  \frac{\tilde{\mu}_{mis} + \mu_{mis}}{2} \Big\rangle \nonumber \\
% %     & \quad + \langle \beta_{mis}^{\star} ,  \Sigma_{mis, obs}\br{\Sigma_{obs}}^{-1}\br{X_{obs}-\mu_{obs}} \rangle
% %     \label{eq:bp_gaussian_sm_approx}
% % \end{align*}
% % and resembles to the Bayes predictor in the MCAR and MAR cases. However, this approximation
% % turns out to be very loose in MNAR setting, as shown in Figure~\ref{fig:compare_architectures_gsm} of the Supplementary Material.

% Assuming the approximation $\br{\Sigma_{mis| obs} D_{mis}^{-1}}^{-1} \approx Id$ in the Bayes predictor with Gaussian self-masking, then it becomes equal to the Bayes predictor in the M(C)AR cases, except that $\mu_{mis}$ is replaced with $\frac{1}{2}(\mu_{mis} + \tilde{\mu}_{mis})$. This highlights the fact that the conditional expectation $\E[X_{mis(M)} | M, X_{obs(M)}]$ in the self-masking case needs to be shifted towards $\tilde{\mu}_{mis}$, that is the value for which the probability of missing is highest. Note however that the approximations  $\br{\Sigma_{mis| obs} D_{mis}^{-1}}^{-1} \approx Id$ may be poor in practice.

%\subsection{Existing approach}

%\begin{itemize}
 %   \item Linear Model
 %   \item EM
 %   \item Imputation
%\end{itemize}

\section{\name networks: learning by approximating the Bayes predictors}

%It is quite easy to estimate the full covariance matrix $\Sigma$, but invert each submatrix is computationnally intensive and instable. We then build on the following inverse approximation due to Neumann.

\subsection{Insight to build a network: sharing parameters across
missing-value patterns}

%The expression of the Bayes predictor under various missing-values mechanisms
%(propositions~\ref{prop:MCAR_MAR} and \ref{prop:sm}) %depends on the
%numerous parameters $\beta^{\star}, \mu, \Sigma$, which makes estimation
%difficult.
%\ml{the parameters are not so numerous, typically compared to a network. In think what makes the estimation difficult is not the nb of params but the form of the Bayes predictor, notably involving an inverse}
%However, it
%provide useful insights to design a  neural
%network architecture that can approximate them well.
%
%
Computing the Bayes predictors in
equations~\eqref{eq:bp_MCAR} or \eqref{eq:bp_MNAR} requires to estimate
the inverse of each submatrix $\Sigma_{obs(m)}$ for each missing-data pattern
$m \in \{0,1\}^d$, \emph{ie} one linear model per missing-data
pattern.
For a number of hidden units $\propto 2^d$, a
MLP with ReLU non-linearities can fit these linear models independently
from one-another, and is shown to be consistent
\citep{morvan2020linear}. But it is prohibitive when $d$ grows.
Such an architecture is largely over-parametrized as it does not
share information between similar missing-data patterns.
Indeed, the slopes of each of the linear regression per pattern given by the Bayes
predictor in equations \eqref{eq:bp_MCAR} and \eqref{eq:bp_MNAR} are
linked via the inverses of $\Sigma_{obs}$.

Thus, one approach is to
estimate only one vector $\mu$ and one covariance matrix $\Sigma$ via an
expectation maximization (EM) algorithm \citep{dempster1977maximum}, and
then compute the inverses of $\Sigma_{obs}$. But the computational
complexity then scales linearly in the
number of missing-data patterns (which is in the worst case exponential in the
dimension $d$), and is therefore also prohibitive when the dimension increases.

In what follows, we propose an in-between
solution, modeling the relationships between the slopes for
different missing-data patterns without directly estimating
the covariance matrix. %Another intuitions for the necessity of sharing parameters is the following.
Intuitively, observations from one pattern will be used to estimate the regression parameters of other patterns.
% Considering two missing-values patterns, with observed values for
% features $X_1$ and $X_2$, and for features $(X_1, X_2, X_3)$.
% Observations from one pattern can be used to estimate the regression
% parameters of the other pattern \citep{d2001examining}.

\subsection{Differentiable approximations of the inverse covariances with Neumann series}

The major challenge of equations~\eqref{eq:bp_MCAR} and
\eqref{eq:bp_MNAR} is the inversion
of the matrices $\Sigma_{obs(m)}$ for all $m \in \{0,1\}^d$. Indeed,
there is no simple relationship for the inverses of different submatrices
in general. As a result, the slope corresponding to a pattern $m$ cannot be
easily expressed as a function of $\Sigma$. %In the following, we build on the Neumann series  to approximate all inverses $(\Sigma_{obs})^{-1}$ using a simple differentiable expression.

We therefore propose to approximate $\br{\Sigma_{obs(m)}}^{-1}$ for all $m \in
\{0,1\}^d$ recursively in the following way. First, we choose as a
starting point a $ d\times d$ matrix $S^{(0)}$. $S^{(0)}_{obs(m)}$ is
then defined as the sub-matrix of $S^{(0)}$ obtained by selecting the
columns and rows that are observed (components for which $m=0$) and is
our order-$0$ approximation of $\br{\Sigma_{obs(m)}}^{-1}$. Then, for all
$m \in \{0,1\}^d$, we define the order-$\ell$ approximation $S^{(\ell)}_{obs(m)}$ of $\br{\Sigma_{obs(m)}}^{-1}$ via the following iterative formula: for all $\ell \geq 1$,
\begin{equation}
S^{(\ell)}_{obs(m)} = (Id - \Sigma_{obs(m)})\,S^{(\ell-1)}_{obs(m)} + Id.
    \label{eq:iterative_algo}
\end{equation}
The iterates $S^{(\ell)}_{obs(m)}$ converge linearly to $(\Sigma_{obs(m)})^{-1}$(\ref{sec:conv_neumann_iterates} in the Supplementary
Materials), and are in fact Neumann series truncated to $\ell$ terms if $S^{(0)} = Id$. % and only require matrix operations between matrices masked with the

 We now define the
order-$\ell$ approximation of the Bayes predictor in MAR settings (equation
~\eqref{eq:bp_MCAR}) as
\begin{equation}
     f^{\star}_{\ell}(X_{obs}, M) =  \langle \beta^\star_{obs},  \bX_{obs} \rangle + \langle \beta^\star_{mis}, \mu_{mis} + \Sigma_{mis,obs}S^{(\ell)}_{obs(m)} (X_{obs} - \mu_{obs}) \rangle.
    \label{eq:bp_MCAR_any_order}
\end{equation}
The error between the Bayes predictor and its order-$\ell$ approximation is provided in Proposition~\ref{prop:bayes_bound_approx}.
\begin{restatable}{proposition}{approxBayes}
    \label{prop:bayes_bound_approx}
    Let $\nu$ be the smallest eigenvalue of $\Sigma$. Assume that the data are generated via a linear model defined in equation~\eqref{eq:predictor} and satisfy Assumption~\ref{ass:gaussian}. Additionally, assume that either Assumption~\ref{ass:mcar} or Assumption~\ref{ass:mar} holds and that the spectral radius of $\Sigma$ is strictly smaller than one. Then, for all $\ell \geq 1$,
    \begin{align}
        \E\biggl[\bigl(  f_{\ell}^{\star}(X_{obs}, M) -
f^{\star}(X_{obs}, M) \bigr)^2\biggr] \;\le\; \frac{(1- \nu)^{2 \ell} \|\beta^{\star}\|_2^2}{\nu} \,
                    \E\biggl[ \bigl\|Id - S^{(0)}_{obs(M)}
\Sigma_{obs(M)}\bigr\|_2^2\biggr]
                    \label{eq:prop_exponential_decay_approx}
    \end{align}
\end{restatable}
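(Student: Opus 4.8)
The plan is to reduce the claim to one bilinear term governed by a Neumann‑remainder estimate. Since, by \eqref{eq:bp_MCAR} and \eqref{eq:bp_MCAR_any_order}, the intercept and the $\langle\beta^\star_{obs},X_{obs}\rangle$ parts of $f^\star$ and $f^\star_\ell$ coincide, conditionally on $\{M=m\}$ the error is
\[
 f^\star_\ell(X_{obs},M)-f^\star(X_{obs},M)=\big\langle\beta^\star_{mis},\,\Sigma_{mis,obs}\big(S^{(\ell)}_{obs(m)}-(\Sigma_{obs})^{-1}\big)(X_{obs}-\mu_{obs})\big\rangle .
\]
First I would unroll \eqref{eq:iterative_algo}: with $R:=Id-\Sigma_{obs(m)}$ one obtains $S^{(\ell)}_{obs(m)}=R^\ell S^{(0)}_{obs(m)}+\sum_{k=0}^{\ell-1}R^k=R^\ell S^{(0)}_{obs(m)}+(Id-R^\ell)(\Sigma_{obs(m)})^{-1}$, hence the exact remainder identity
\[
 (\Sigma_{obs(m)})^{-1}-S^{(\ell)}_{obs(m)}=R^\ell\big((\Sigma_{obs(m)})^{-1}-S^{(0)}_{obs(m)}\big)=(Id-\Sigma_{obs(m)})^\ell\big(Id-S^{(0)}_{obs(m)}\Sigma_{obs(m)}\big)(\Sigma_{obs(m)})^{-1},
\]
where I used that $R^\ell$ commutes with $\Sigma_{obs(m)}$ and its inverse and that $(\Sigma_{obs(m)})^{-1}-S^{(0)}_{obs(m)}=(Id-S^{(0)}_{obs(m)}\Sigma_{obs(m)})(\Sigma_{obs(m)})^{-1}$. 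The placement of the factors here is the delicate bookkeeping: it is exactly this factorisation that makes $Id-S^{(0)}_{obs}\Sigma_{obs}$, rather than $Id-\Sigma_{obs}S^{(0)}_{obs}$, appear.

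Next I would compute the conditional second moment. Plugging in the remainder identity, the vector multiplying $X_{obs}-\mu_{obs}$ is $\gamma_m=-(\Sigma_{obs})^{-1}\big(Id-S^{(0)}_{obs}\Sigma_{obs}\big)^{\top}(Id-\Sigma_{obs})^\ell\Sigma_{obs,mis}\beta^\star_{mis}$ (using symmetry of $(\Sigma_{obs})^{-1}$ and of $(Id-\Sigma_{obs})^\ell$). Under Assumption~\ref{ass:mcar}, $X_{obs}$ given $\{M=m\}$ has the marginal law $\mathcal{N}(\mu_{obs},\Sigma_{obs})$, so
\[
 \E\big[(f^\star_\ell-f^\star)^2\,|\,M=m\big]=\gamma_m^{\top}\Sigma_{obs}\gamma_m=\big\|\Sigma_{obs}^{-1/2}\big(Id-S^{(0)}_{obs}\Sigma_{obs}\big)^{\top}(Id-\Sigma_{obs})^\ell\Sigma_{obs,mis}\beta^\star_{mis}\big\|_2^2 .
\]
The cancellation $\Sigma_{obs}^{1/2}(\Sigma_{obs})^{-1}=\Sigma_{obs}^{-1/2}$ is the crux: one power of $\Sigma_{obs}$ from the conditional covariance absorbs the $(\Sigma_{obs})^{-1}$ carried by the regression coefficient, so only $\|\Sigma_{obs}^{-1/2}\|_2^2\le1/\nu$ survives (hence $1/\nu$, not $1/\nu^2$). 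I would then bound the remaining factors by operator norms: $\|Id-\Sigma_{obs}\|_2\le1-\nu$ and $\|\Sigma_{obs}^{-1/2}\|_2^2\le1/\nu$ by Cauchy interlacing (the eigenvalues of the principal submatrix $\Sigma_{obs}$ lie in $[\nu,\rho(\Sigma)]\subset(0,1)$, using the spectral‑radius hypothesis); $\|\Sigma_{obs,mis}\|_2\le\|\Sigma\|_2=\rho(\Sigma)<1$ since $\Sigma_{obs,mis}$ is a submatrix of $\Sigma$; and $\|\beta^\star_{mis}\|_2\le\|\beta^\star\|_2$. Keeping $\|Id-S^{(0)}_{obs(m)}\Sigma_{obs(m)}\|_2$ untouched gives $\E[(f^\star_\ell-f^\star)^2\,|\,M=m]\le\frac{(1-\nu)^{2\ell}\|\beta^\star\|_2^2}{\nu}\|Id-S^{(0)}_{obs(m)}\Sigma_{obs(m)}\|_2^2$, and taking the expectation over $M$ yields \eqref{eq:prop_exponential_decay_approx}.

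The main obstacle is the MAR case (Assumption~\ref{ass:mar}): there $X_{obs}$ given $\{M=m\}$ is no longer Gaussian and its second moment is not controlled a priori, so the identity $\E[(f^\star_\ell-f^\star)^2\,|\,M=m]=\gamma_m^{\top}\Sigma_{obs}\gamma_m$ fails. The way around it is to use that MAR forces the conditional density of $X_{obs}$ given $\{M=m\}$ to equal $p(x_{obs})P(M=m\,|\,x_{obs})/P(M=m)$, hence to be dominated by $p(x_{obs})/P(M=m)$ with $p$ the Gaussian marginal density; writing $\E[(f^\star_\ell-f^\star)^2]=\sum_m\E_{X_{obs}\sim\mathcal{N}(\mu_{obs},\Sigma_{obs})}\big[P(M=m\,|\,X_{obs})\,(\gamma_m^{\top}(X_{obs}-\mu_{obs}))^2\big]$ and bounding $P(M=m\,|\,X_{obs})\le1$ returns each summand to the Gaussian computation above. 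Recovering the exact expectation‑weighted right‑hand side of \eqref{eq:prop_exponential_decay_approx} through this domination — rather than an unweighted sum over patterns — is the step I expect to require the most care; everything else is linear algebra and standard Gaussian moment identities.
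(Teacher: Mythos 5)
Your MCAR argument is correct and follows the paper's proof essentially step for step: the same reduction of the error to the single bilinear term $\langle\beta^{\star}_{mis},\Sigma_{mis,obs}(S^{(\ell)}_{obs}-\Sigma_{obs}^{-1})(X_{obs}-\mu_{obs})\rangle$, the same identity $\E[(f^{\star}_{\ell}-f^{\star})^2\mid M=m]=\gamma_m^{\top}\Sigma_{obs}\gamma_m$ via the conditional covariance, the same geometric contraction of the Neumann remainder, and the same interlacing bounds $\nu_{obs}\ge\nu$, $\|\Sigma_{obs,mis}\|_2\le\|\Sigma\|_2<1$, $\|\Sigma_{obs}^{-1}\|_2\le 1/\nu$. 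Your explicit unrolled remainder identity plays the role of the paper's auxiliary proposition on linear convergence of the iterates, and your bookkeeping lands on $Id-S^{(0)}_{obs}\Sigma_{obs}$ exactly as in the statement, whereas the paper's own chain ends with $Id-\Sigma_{obs}S^{(\ell)}_{obs}$ (a similar matrix, of equal norm when $S^{(0)}$ is symmetric); that difference is cosmetic.

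The genuine gap is the MAR case, and the repair you sketch does not close it. You are right that under Assumption~\ref{ass:mar} the law of $X_{obs}$ given $\{M=m\}$ is the Gaussian marginal tilted by $P(M=m\mid X_{obs})/P(M=m)$, so its second moment about $\mu_{obs}$ need not be $\Sigma_{obs}$. But bounding $P(M=m\mid X_{obs})\le 1$ inside $\E[(f^{\star}_{\ell}-f^{\star})^2]=\sum_m\E_{\mathcal N}\bigl[P(M=m\mid X_{obs(m)})\,(\gamma_m^{\top}(X_{obs(m)}-\mu_{obs(m)}))^2\bigr]$ throws away the weights $P(M=m)$ and produces the \emph{unweighted} sum $\sum_m\gamma_m^{\top}\Sigma_{obs(m)}\gamma_m$, which can exceed the expectation-weighted right-hand side of \eqref{eq:prop_exponential_decay_approx} by a factor of up to $2^d$; you acknowledge this yourself, so as written the MAR half of the proposition is not established. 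For what it is worth, the paper's own proof simply asserts $\E[(X_{obs}-\mu_{obs})(X_{obs}-\mu_{obs})^{\top}\mid M]=\Sigma_{obs}$ without comment --- immediate under MCAR, but requiring precisely the justification you flag under MAR --- so your concern is legitimate; it just needs a different resolution than uniform domination (e.g., controlling the tilted conditional second moment pattern by pattern, or restating the bound with the conditional covariance kept inside the expectation).
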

The error of the order-$\ell$ approximation decays exponentially fast with $\ell$.
%This means that increasing the depth of our model will provide better approximation for the Bayes predictor.
More importantly, if the submatrices $S^{(0)}_{obs}$ of $S^{(0)}$ are good approximations of  $(\Sigma_{obs})^{-1}$ on average, that is if we choose $S^{(0)}$ which minimizes the expectation in the right-hand side in inequality~\eqref{eq:prop_exponential_decay_approx}, then our model provides a good approximation of the Bayes predictor even with order $\ell = 0$.
This is the case for a diagonal covariance matrix, as taking $S^{(0)}=
\Sigma^{-1}$ has no approximation error as $(\Sigma^{-1})_{obs} = (\Sigma_{obs})^{-1}$.

%The approximation of the Bayes rate can thus be improved by selecting a matrix $S^0$ which minimizes the expected value of $(1 - \nu_{obs})^{\ell}\|Id - S^{0}_{obs} \Sigma_{obs}\|_2^2$ over the missing data patterns distribution.
%\jj{Even if it is well written I had to read more than twice all the section, because I had an impression of circular argument, I want to approximate the inverses of many matrices, I use an approximation but it involves another matrix S0 that I take close such as it is close to the target  }

\subsection{\name network architecture: multiplying by the mask}
\paragraph{Network architecture} We propose a neural-network architecture to approximate the Bayes predictor, where the inverses $(\Sigma_{obs})^{-1}$ are computed using an unrolled version of the iterative algorithm. Figure~\ref{fig:unrolled_NN} gives  a diagram for such neural network using an order-3
approximation corresponding to a depth 4. $x$ is the input, with missing
values replaced by 0. $\mu$ is a trainable parameter corresponding to the parameter $\mu$ in equation~\eqref{eq:bp_MCAR_any_order}. To match the Bayes predictor exactly (equation~\eqref{eq:bp_MCAR_any_order}), weight matrices should be simple transformations of the covariance matrix indicated in blue on Figure~\ref{fig:unrolled_NN}.

Following strictly Neummann iterates would call for
a shared weight matrix across
all $W_{Neu}^{(k)}$. Rather, we learn each layer independently. This choice is motivated by works on iterative algorithm unrolling \cite{gregor2010learning} where  independent layers' weights can improve a network's approximation performance \cite{Xin2016}. Note that \cite{Gilton2020Neumann} has also introduced a neural network architecture based on unrolling the Neumann series. However, their goal is to solve a linear inverse problem with a learned regularization, which is very different from ours.
%unfolded  Following this idea, we could force the weight matrices $W_{Neu}^{(1)}$ and $W_{Neu}^{(2)}$ to be equal so that we just have to learn one weight matrix. However, it turns out that such constraint damages the optimization process. This phenomenon is similar to the one highlighted in LISTA \citep{gregor2010learning}, which has been documented since in
%

\paragraph{Multiplying by the mask} Note that the observed indices change for each sample, leading to an implementation challenge. For a sample with missing data pattern $m$,  the weight matrices $S^{(0)}$, $W_{Neu}^{(1)}$ and
$W_{Neu}^{(2)}$ of Figure~\ref{fig:unrolled_NN} should be masked such
that their rows and columns corresponding to the indices $mis(m)$ are
zeroed, and the rows of $W_{Mix}$ corresponding to $obs(m)$ as well as
the columns of $W_{Mix}$ corresponding to $mis(m)$ are zeroed.
Implementing efficiently a network in which the weight matrices are masked
differently for each sample can be challenging. We thus use the following trick.
Let $W$ be a weight matrix, $v$ a vector, and $\bar{m} = 1 -
m$. Then $(W \odot \bar{m}\bar{m}^\top) v = (W (v\odot \bar{m}))\odot
\bar{m}$, i.e, using a masked weight matrix is equivalent to masking the
input and output vector.
The network can then be seen as a classical network where the nonlinearities
are multiplications by the mask.

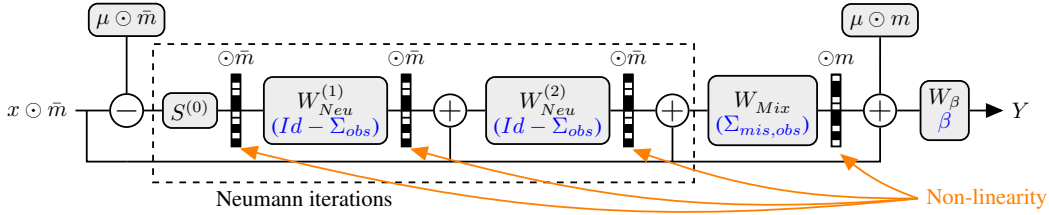
\begin{figure}[b]
    \centering
    \scalebox{.85}{\begin{tikzpicture}[thick, >=triangle 45, scale=0.8]

    \def\learncolor{black!7}
    \tikzset{%
        learnable/.style = {
            fill=\learncolor,
        },
        parameter/.style = {
            draw, rectangle,
            align=center,
           rounded corners,
           learnable
        },
        layer/.style = {
            parameter,
        },
        mu/.style = {
            draw, rectangle,
            align=center,
           fill=white,
           rounded corners,
           learnable
        },
        sum/.style = {
            draw, circle,
            inner sep=0pt
        },
        % Define arrow style
        pil/.style={
               ->,
               thick,
              % shorten <=2pt,
               %shorten >=2pt,}
        }
    }

    \def\interlayer{.8em}
    \def\intermask{2em}

    % Define the nodes
    \node(X_obs){$x \odot \bar m$};
    \node[right=1.5em of X_obs, sum] (add1) {\Large$-$};
    \node[above=2.5em of add1, mu] (mu_obs){$\mu \odot \bar m$};
    \node[right=\interlayer of add1, parameter] (S0) {$S^{(0)}$};
    \node[right=\intermask of S0, layer] (layer1) {$W_{Neu}^{(1)}$\\ \textcolor{blue}{($Id - \Sigma_{obs}$)}};
    \node[right=\intermask of layer1, sum] (add2) {\Large$+$};
    \node[right=\interlayer of add2, layer] (layer2) {$W_{Neu}^{(2)}$\\ \textcolor{blue}{($Id - \Sigma_{obs}$)}};
    \node[right=\intermask of layer2, sum] (add3) {\Large$+$};
    \node[right=\interlayer of add3, layer] (layer3) {$W_{Mix}^{\textcolor{\learncolor}{(3)}}$\\ \textcolor{blue}{($\Sigma_{mis,obs}$)}};
    \node[right=\intermask of layer3, sum] (add4) {\Large$+$};
    \node[above=2.5em of add4, mu] (mu_obs2){$\mu \odot m$};
    \node[right=1em of add4, parameter] (layer4) {$W_\beta$\\ \textcolor{blue}{$\beta$}};
    \node[right=1.5em of layer4] (Y) {$Y$};
    
    % Draw the connections
    \draw[pil] (X_obs) -- (add1) -- (S0) coordinate[midway] (rect1) 
                  -- (layer1) coordinate[pos=.3] (mask0)
                  -- (add2) coordinate[pos=.3] (mask1) -- (layer2)
                  -- (add3) coordinate[pos=.3] (mask2) -- (layer3)
                  -- (add4) coordinate[pos=.3] (mask3) -- (layer4)
                  -- (Y);
    \draw (mu_obs) -- (add1);
    \draw (mu_obs2) -- (add4);

    % Mock nodes to draw residual connections
    \coordinate[left=1em of add1] (mock){};
    \coordinate[below=1.5em of add1] (start){};
    \draw (mock) |- (start);
    \foreach \i in {2, 3, 4}{
        \draw (start) -| (add\i) coordinate[midway] (start) {};
    
    }
    
    % then draw the masks
    \def\height{2em}
    \def\width{.5em}
    \def\nrect{10}
    \def\mask{0, 1, 2, 3}
    \foreach \k in \mask{
    
        \ifthenelse{\k = 3}{
            \def\blackidx{1, 3, 4, 7, 8}
            \def\masklabel{$\odot m$}
        }{
            \def\blackidx{0, 2, 5, 6, 9}
            \def\masklabel{$\odot \bar m$}
        }
        \draw (mask\k.west)
            -- ++(0, \height)
            -- ++(\width, 0) node[midway, above] {\masklabel}
            -- ++(0, -2*\height)
            -- ++(-\width, 0) node (anchor\k) {}
            -- cycle;
    
        \foreach \i in {1,...,\nrect}{
            \draw ($ (anchor\k.center) + (0, 2*\i*\height/\nrect)$)
                -- ++(\width, 0);
        }
        
        \foreach \i in \blackidx{
            \fill[black] ($ (anchor\k.center) + (0, 2*\i*\height /\nrect)$)
                -- ++(0, 2*\height/\nrect)
                -- ++(\width, 0)
                -- ++(0, -2*\height/\nrect)
                -- cycle;
        
        }
    }
    
    % Mock nodes to draw the box
    % \coordinate[right=.5em of mock.center] (rect1){};
    \coordinate[right=.2em of add3] (rect2){};
    \draw[dashed]
        (rect1) |-($(rect1) +(3em, 3.7em)$) -- ($(rect2) + (-3em, 3.7em)$)
        -| (rect2) |-($(rect2) -(4em, 4em)$) -- ($(rect1) - (-4em, 4em)$)
            node[below, pos=.8] (desc) {Neumann iterations}
        -| cycle;
        
    % Arrows non-linearity
    \node[right=23em of desc, orange] (labelNL) {Non-linearity};
    
    \foreach \k in \mask{
        \draw[orange,->] (labelNL.west) to[bend left=10] (anchor\k);
    }

\end{tikzpicture}}
    \caption{\textbf{\name network architecture with a depth of 4} --- $\bar{m} = 1-m$. Each weight matrix $W^{(k)}$ corresponds to a simple transformation of the covariance matrix indicated in blue.}
    \label{fig:unrolled_NN}
\end{figure}

\paragraph{Approximation of the Gaussian self-masking Bayes predictor} Although our architecture is motivated by the expression of the Bayes predictor in MCAR and MAR settings, a similar architecture can be used to target the prediction function \eqref{eq:bp_MNAR} for self-masking data.
%as we expect that the differences with \eqref{eq:bp_MCAR} can be absorbed in the parameters $\mu$ and $W_{Mix}$ of the network.
To see why, let's first assume that $D_{mis}\Sigma_{mis| obs}^{-1} \approx Id$. Then, the self-masking Bayes predictor \eqref{eq:bp_MNAR} becomes:
\begin{align}
 \label{eq:bp_MNAR_approx1}
    f^{\star}(X_{obs},M) &\approx \beta_0^{\star} + \bigl\langle \beta_{obs}^{\star},  \bX_{obs} \rangle \nonumber\\
    & \quad + \langle \beta_{mis}^{\star},
   \frac{1}{2} (\tilde{\mu}_{mis} + \mu_{mis}) + \frac{1}{2} \Sigma_{mis, obs} \br{\Sigma_{obs}}^{-1}\br{X_{obs}-\mu_{obs}}\bigr\rangle
\end{align}
i.e., its expression is the same as for the M(C)AR Bayes predictor \eqref{eq:bp_MCAR} except that $\mu_{mis}$ is replaced by $\frac{1}{2} (\tilde{\mu}_{mis} + \mu_{mis})$ and $\Sigma_{mis, obs}$ is scaled down by a factor $\frac{1}{2}$. Thus, under this approximation, the self-masking Bayes predictor can be modeled by our proposed architecture (just as the M(C)AR Bayes predictor), the only difference being the targeted values for the parameters $\mu$ and $W_{mix}$ of the network. A less coarse approximation also works: $D_{mis}\Sigma_{mis| obs}^{-1} \approx \hat{D}_{mis}$ where $\hat{D}$ is a diagonal matrix. In this case, the proposed architecture can perfectly model the self-masking Bayes predictor: the parameter $\mu$ of the network should target $(Id + \hat{D})^{-1} (\tilde{\mu} + \hat{D}\mu)$ and $W_{mix}$ should target $(Id +\hat{D})^{-1}\hat{D}\,\Sigma$ instead of simply $\Sigma$ in the M(C)AR case. Consequently, our architecture can well approximate the self-masking Bayes predictor by adjusting the values learned for the parameters $\mu$ and $W_{mix}$ if $D_{mis}\Sigma_{mis| obs}^{-1}$ are close to diagonal matrices.

%\jj{Super clear the paragraph! One question: similarly to LISTA... why? advantages or reasons?}

%\es{TODO : parler des connexions résiduelles}

\subsection{Link with the multilayer perceptron with ReLU activations}

A common practice to handle missing values
is to consider as input the data concatenated with the mask
\textsl{eg} in  \cite{morvan2020linear}. The next proposition connects
this practice to Neumman networks.

\begin{restatable}[equivalence MLP - depth-1 \name network]{proposition}{eqNeumannMLP}
\label{prop:link_MLP_Neumann}
Let $\sqb{X\odot(1-M), M} \in [0,1]^d \times \cbr{0, 1}^{d}$ be an input $X$ imputed by 0 concatenated with the mask $M$.
\begin{itemize}[itemsep=0ex, topsep=0ex, partopsep=0.5ex, parsep=0.5ex]
    \item Let $\mathcal{H}_{ReLU} = \br{W \in \RR^{d \times 2d}, ReLU}$ be a hidden layer which connects $\sqb{X\odot(1-M), M}$ to $d$ hidden units, and applies a ReLU nonlinearity to the activations.

    \item Let $\mathcal{H}_{\odot M} = \br{W \in \RR^{d \times d}, \mu, \odot M}$ be a hidden layer that connects an input $(X-\mu) \odot (1-M)$ to $d$ hidden units, and applies a $\odot M$ nonlinearity.
\end{itemize}
Denote by $h^{ReLU}_k$ and  $h^{\odot M}_k$ the outputs of the $k^{th}$ hidden unit of each layer. Then there exists a configuration of the weights of the hidden layer $\mathcal{H}_{ReLU}$ such that $\mathcal{H}_{\odot M}$ and $\mathcal{H}_{ReLU}$ have the same hidden units activated for any $(X_{obs}, M)$, and activated hidden units are such that $h^{ReLU}_k(X_{obs}, M) = h^{\odot M}_k(X_{obs}, M) + c_k$ where $c_k \in \RR$.
\end{restatable}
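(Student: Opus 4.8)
The plan is to exhibit an explicit choice of weights (and bias) for $\mathcal{H}_{ReLU}$ and to read off the constants $c_k$ by a direct computation. First I would unfold the two layers. Writing $W_k$ for the $k$-th row of the weight matrix $W$ of $\mathcal{H}_{\odot M}$, and using that the vector fed to $\mathcal{H}_{\odot M}$ is $(X-\mu)\odot(1-M)$, the $\odot M$ nonlinearity gives
\[
 h^{\odot M}_k(X_{obs}, M) \;=\; M_k\, \big\langle W_k,\,(X-\mu)\odot(1-M)\big\rangle \;=\; M_k \sum_{j\in obs(M)} W_{kj}(X_j-\mu_j),
\]
so unit $k$ of $\mathcal{H}_{\odot M}$ is ``activated'' (its mask factor is nonzero) exactly on the event $\{M_k=1\}$. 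It therefore suffices to design $\mathcal{H}_{ReLU}$ so that its $k$-th unit has a strictly positive pre-activation exactly when $M_k=1$, and so that on $\{M_k=1\}$ this pre-activation equals $h^{\odot M}_k + c_k$ for some constant $c_k$.

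Second I would propose the weights. Split the $d\times 2d$ matrix of $\mathcal{H}_{ReLU}$ as $[A \mid B]$ with $A,B\in\RR^{d\times d}$ acting respectively on $X\odot(1-M)$ and on $M$, together with a bias $b\in\RR^d$ (the latter is standard for a hidden layer; if one insists on no bias it can be folded in after augmenting the input with a constant coordinate). Set $A_{kj}=W_{kj}$ for all $j$, $B_{kj}=W_{kj}\mu_j$ for $j\neq k$, $B_{kk}=W_{kk}\mu_k+2\Lambda$, and $b_k=-\Lambda$, with $\Lambda>0$ a constant to be fixed. The pre-activation of unit $k$ is then
\[
 a_k \;=\; \sum_{j\in obs(M)}W_{kj}X_j \;+\;\sum_{j\in mis(M)}W_{kj}\mu_j \;+\; 2\Lambda M_k \;-\;\Lambda ,
\]
since the extra $2\Lambda$ lives only in $B_{kk}$ and contributes iff $M_k=1$. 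Using $\sum_{j\in obs(M)}W_{kj}X_j=h^{\odot M}_k+\sum_{j\in obs(M)}W_{kj}\mu_j$ and $\sum_{j\in obs(M)}W_{kj}\mu_j+\sum_{j\in mis(M)}W_{kj}\mu_j=\langle W_k,\mu\rangle$, on $\{M_k=1\}$ this collapses to $a_k = h^{\odot M}_k + \langle W_k,\mu\rangle + \Lambda$, i.e. $a_k = h^{\odot M}_k + c_k$ with $c_k := \langle W_k,\mu\rangle+\Lambda \in \RR$, which indeed does not depend on $(X,M)$.

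Third I would fix $\Lambda$ and verify the activation pattern. Because every admissible input satisfies $X\in[0,1]^d$, the quantity $\sup_{m,X}\big|\sum_{j\in obs(m)}W_{kj}X_j+\sum_{j\in mis(m)}W_{kj}\mu_j\big|$ is finite, bounded by $\sum_j|W_{kj}|\max(1,|\mu_j|)$, and likewise $|h^{\odot M}_k|$ is uniformly bounded. Choosing $\Lambda$ larger than $\max_k\big(\sum_j|W_{kj}|\max(1,|\mu_j|)+|\langle W_k,\mu\rangle|\big)$ then forces $a_k>0$ on $\{M_k=1\}$ and $a_k<0$ on $\{M_k=0\}$, uniformly over all $2^d$ patterns and all $X\in[0,1]^d$. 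Hence unit $k$ of $\mathcal{H}_{ReLU}$ is activated iff $M_k=1$, matching $\mathcal{H}_{\odot M}$; and on activated units $h^{ReLU}_k = \mathrm{ReLU}(a_k)=a_k = h^{\odot M}_k + c_k$, which is the claim.

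The main obstacle is not depth but bookkeeping: one must select a single constant $\Lambda$ that pins down the sign of the pre-activation simultaneously for all missing-data patterns, which is precisely where the boundedness of the input domain $[0,1]^d$ is used; and one must check that the residual additive term $c_k$ is genuinely constant in $(X,M)$, which is exactly what the algebra of the second step guarantees. Everything else is a routine expansion of inner products.
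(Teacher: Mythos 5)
Your proof is correct and takes essentially the same route as the paper's: you set the $X$-block of the ReLU weights equal to $W$, the off-diagonal mask-weights to $W_{kj}\mu_j$, and use the diagonal mask-weight together with the bias to gate the sign of the pre-activation, arriving at the same constant offset $c_k$. Your version is merely a more explicit instantiation (a single large $\Lambda$ with an explicit bound from $X\in[0,1]^d$, and a closed form $c_k=\langle W_k,\mu\rangle+\Lambda$) of the existence argument the paper gives.
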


Proposition~\ref{prop:link_MLP_Neumann} states that a hidden layer
$\mathcal{H}_{ReLU}$ can be rewritten as a $\mathcal{H}_{\odot M}$ layer
up to a constant. Note that, as soon as another layer is stacked after
$\mathcal{H}_{\odot M}$ or  $\mathcal{H}_{ReLU}$, this additional
constant can be absorbed into the biases of this new layer.
Thus the weights of $\mathcal{H}_{ReLU}$ can be
learned so as to mimic $\mathcal{H}_{\odot M}$. In our case, this means
that a MLP with ReLU activations, one hidden layer of $d$ hidden units,
and which operates on the concatenated vector, is closely related to
the $1$-depth \name network (see Figure~\ref{fig:unrolled_NN}),
thereby providing theoretical support for the use of the latter MLP. This
theoretical link completes the results of
\cite{morvan2020linear}, who showed experimentally that in such a MLP
$O(d)$ units were enough to perform well on Gaussian data, but only provided theoretical results with $2^d$ hidden units.

\section{Empirical results}

\subsection{The $\odot M$ nonlinearity is crucial to the performance}
The specificity of \name networks resides in the $\odot M$
nonlinearities, instead of more conventional choices such as ReLU.
Figure~\ref{fig:depth_effect} shows how the choice of nonlinearity
impacts the performance as a function of the depth. We compare two
networks that take as input the data imputed by 0 concatenated with
the mask:  MLP Deep which has  1 to 10 hidden layers of $d$ hidden units followed by ReLU nonlinearities  and MLP Wide which has  one hidden layer whose width is increased followed by a ReLU nonlinearity. This latter  was shown to be consistent given $2^d$ hidden units \citep{morvan2020linear}.
%Neumann refers to our architecture with
%stacked $\odot M$ nonlinearites.

Figure~\ref{fig:depth_effect} shows that increasing the capacity (depth) of MLP Deep
fails to improve the performances, unlike with
\name networks. Similarly, it is also significantly more effective to increase the capacity of the \name network (depth) than to increase the capacity (width) of MLP Wide.
These results highlight the crucial role played by
the $\odot$ nonlinearity.
Finally, the performance of MLP Wide with $d$ hidden units is close to
that of \name with a depth of 1, suggesting that it may rely on the weight
configuration established in Proposition~\ref{prop:link_MLP_Neumann}.

\begin{figure}

\begin{minipage}{.35\linewidth}
    \caption{\textbf{Performance as a function of capacity across
architectures} --- Empirical evolution of the performance for a linear
generating mechanism in MCAR settings.
Data are generated under a linear model with Gaussian covariates in a
MCAR setting (50\% missing values, $n=10^5$, $d=20$).
\label{fig:depth_effect}}
\end{minipage}%
\hfill%
\begin{minipage}{.58\linewidth}
    \includegraphics[width=\linewidth]{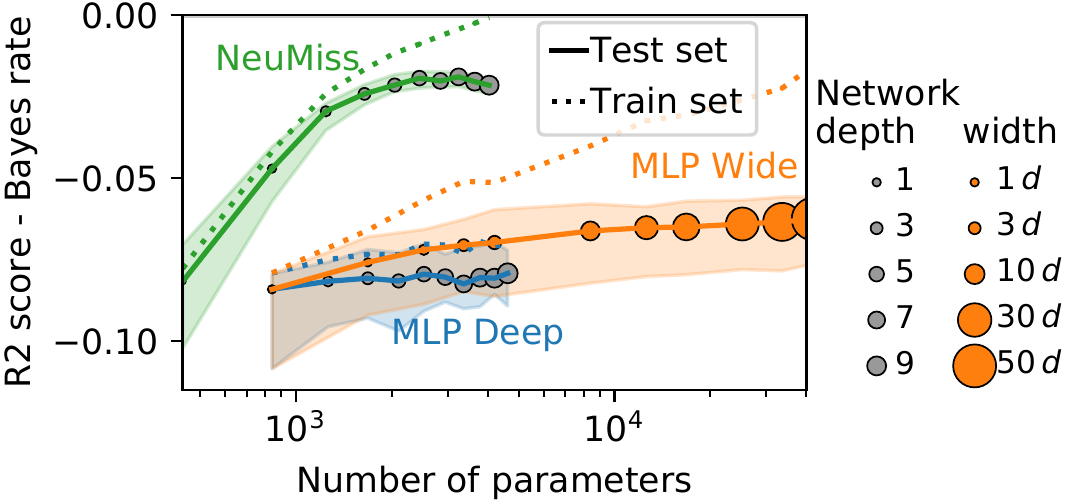}
\end{minipage}
\end{figure}

\subsection{Approximation learned by the \name network}
The \name architecture was designed to approximate well the Bayes predictor
(\ref{eq:bp_MCAR}). As shown in
Figure~\ref{fig:unrolled_NN}, its weights can be chosen so as to express
the Neumann approximation of the Bayes predictor
(\ref{eq:bp_MCAR_any_order}) exactly. We will call this particular
instance of the network, with $S^{(0)}$ set to identity, the analytic network. However, just like LISTA \citep{gregor2010learning}
learns improved weights compared to the ISTA iterations, the \name
network may learn improved weights compared to the Neumann iterations.
Comparing the performance of the analytic network to its learned
counterpart on simulated MCAR data, Figure \ref{fig:different_neumann} (left)
shows that the learned network requires a much smaller depth compared to
the analytic network to reach a given performance. Moreover, the depth-1
learned network largely outperforms the depth-1 analytic network, which
means that it is able to learn a good initialization $S^{(0)}$
for the iterates. Figure \ref{fig:different_neumann} also compares the
performance of the learned network with and without residual connections,
and shows that residual connections are not needed for good performance. This observation is another hint that the iterates learned by the network depart from the Neumann ones.

\subsection{\name networks require $O(d^2)$ samples}

Figure \ref{fig:different_neumann} (right) studies the depth for which \name
networks perform well for different number of samples $n$ and features
$d$. It outlines that \name networks work well in regimes with more
than 10 samples available per model parameters, where the number of model
parameters scales as $d^2$. In general, even with many samples, depth of
more than 5 explore diminishing returns.
Supplementary figure \ref{fig:different_neumann_mnar} shows the same
behavior in various MNAR settings.

\begin{figure}[ht]
    \includegraphics[trim={0.5cm 0 0 0.25cm}, height=.317\linewidth]{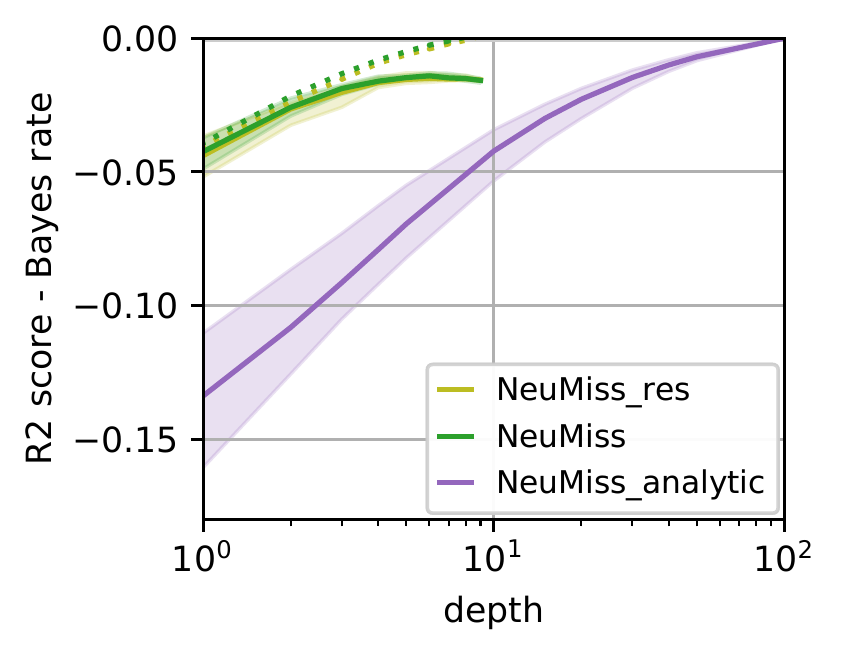}%
    \quad
    \includegraphics[height=.317\linewidth]{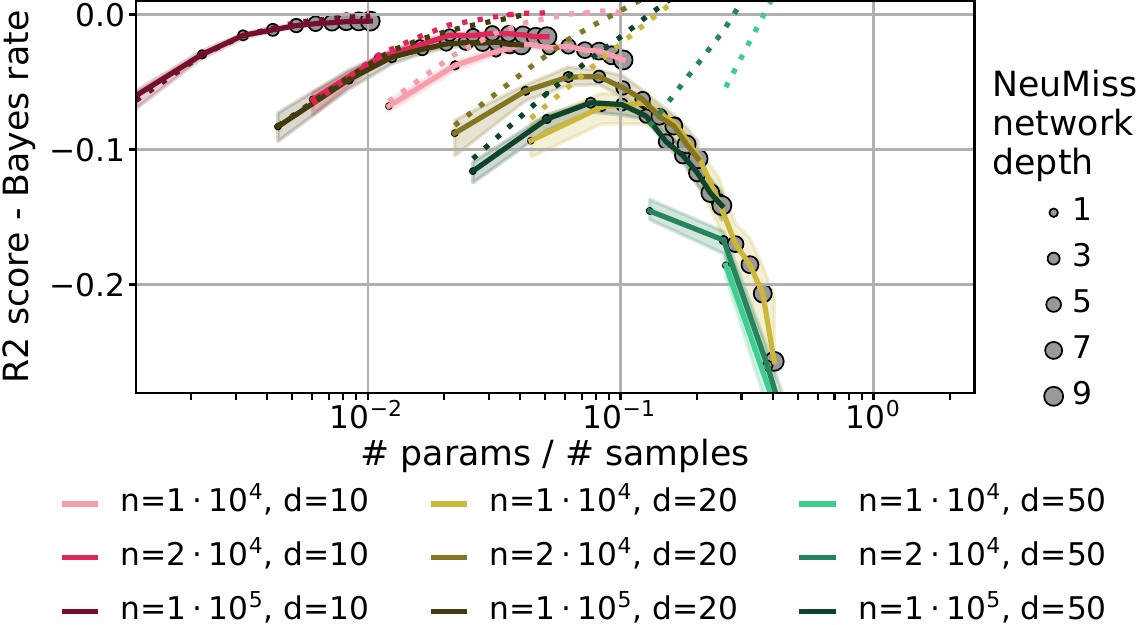}%
    \llap{\raisebox{.29\linewidth}{\sffamily MCAR}\,\qquad\qquad}%
    %\llap{\raisebox{.28\linewidth}{\sffamily MNAR}~~}%

    \caption{\textbf{Left: learned versus analytic Neumann iterates} --- \name
analytic is the \name architecture with weights set to represent \eqref{eq:iterative_algo}, supposing we have access to the ground truth parameters, \name (resp. \name res) corresponds to the network without (resp. with) residual connections.
\qquad
\textbf{Right: Required capacity in various settings} ---
Performance of \name networks varying the depth in simulations with
different number of samples $n$ and of features $d$.
\label{fig:different_neumann}}
\end{figure}

\subsection{Prediction performance: \name networks are robust to the missing data mechanism}

We now evaluate the performance of \name networks compared to other
methods under various missing values mechanisms. The data are generated
according to a multivariate Gaussian distribution, with a covariance matrix
$\Sigma = UU^\top + \text{diag}(\epsilon)$, $U \in \RR^{d \times
\frac{d}{2}}$, and the entries of $U$ drawn from a standard normal
distribution. The noise $\epsilon$ is a vector of entries drawn uniformly in $\sqb{10
^{-2}, 10^{-1}}$ to make $\Sigma$ full rank. The mean is drawn from a
standard normal distribution. The response $Y$ is generated as a linear
function of the complete data $X$ as in equation~\ref{eq:predictor}. The
noise is chosen to obtain a signal-to-noise ratio of 10.  50\% of entries
on each features are missing, with various missing data mechanisms: MCAR, MAR, Gaussian self-masking and Probit self-masking. The Gaussian self-masking is obtained according to Assumption ~\ref{ass:selfmasked_gaussian}, while the Probit self-masking is a similar setting where the probability for feature $j$ to be missing depends on its value $X_j$ through an inverse probit function.
We compare the performances of the following methods:
\begin{itemize}[itemsep=0ex, topsep=0ex, partopsep=0.5ex, parsep=0.5ex,
		leftmargin=2.5ex]
     \item \textbf{EM}: an Expectation-Maximisation algorithm
\citep{norm} is run to estimate the parameters of the joint probability
distribution of $X$ and $Y$ --Gaussian-- with missing values. Then based on this estimated distribution, the prediction is given by taking the expectation of $Y$ given $X$.

    \item \textbf{MICE + LR}: the data is first imputed using conditional
imputation as implemented in scikit-learn's \citep{scikit-learn} IterativeImputer, which proceeds by iterative ridge regression. It adapts the well known MICE \citep{vanbuuren_2018} algorithm to be able to impute a test set. A linear regression is then fit on the imputed data.

    \item \textbf{MLP}: A multilayer perceptron as in
\cite{morvan2020linear}, with one hidden layer followed by a ReLU
nonlinearity, taking as input the data imputed by 0 concatenated with the
mask. The width of the hidden layer is varied between $d$ and $100\,d$
hidden units, and chosen using a validation set. The MLP is trained using
ADAM and a batch size of 200. The learning rate is initialized to $\frac{10^{-2}}{d}$ and decreased by a factor of 0.2 when the loss stops decreasing for 2 epochs. The training finishes when either the learning rate goes below $5\times 10^{-6}$ or the maximum number of epochs is reached.

    \item \textbf{\name}: The \name architecture, without residual
connections, choosing the depth on a validation set.
The architecture was implemented using PyTorch
\cite{paszke2019pytorch}, and
optimized using stochastic gradient descent and a batch size of 10. The learning rate schedule and stopping criterion are the same as for the MLP.
\end{itemize}

\begin{figure}
    {\sffamily\small\hspace*{.08\linewidth} MCAR
    \hfill Gaussian self-masking
    \hspace*{.07\linewidth} Probit self-masking\hspace*{.1\linewidth}}\vspace*{-.1ex}

    \includegraphics[height=.34\linewidth]{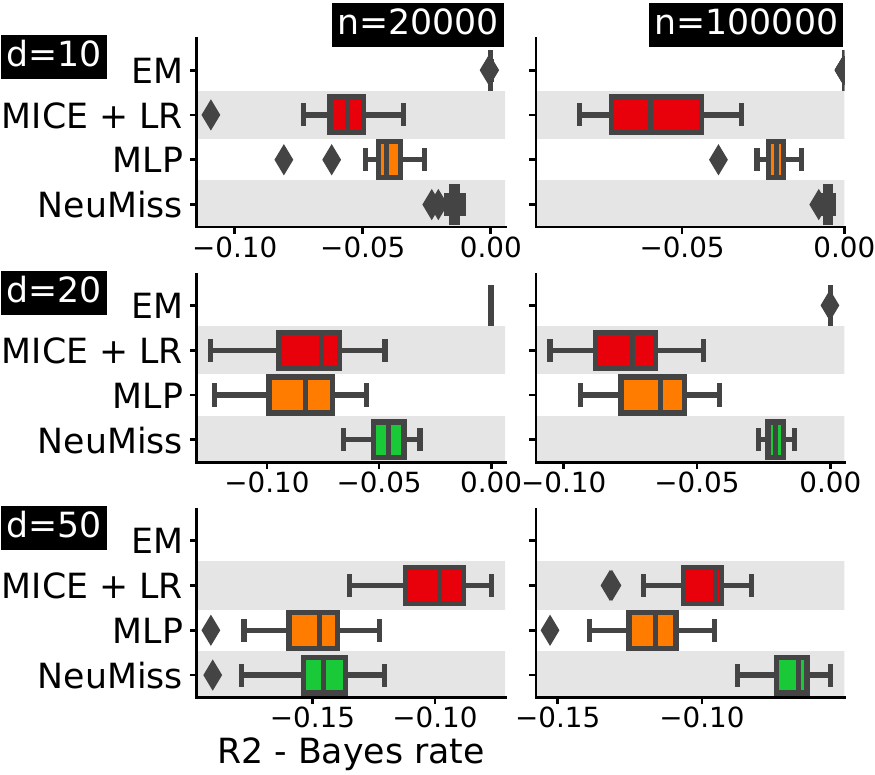}%
    \hfill%
    \includegraphics[clip, trim={1.9cm 0 0 0},
	height=.34\linewidth]{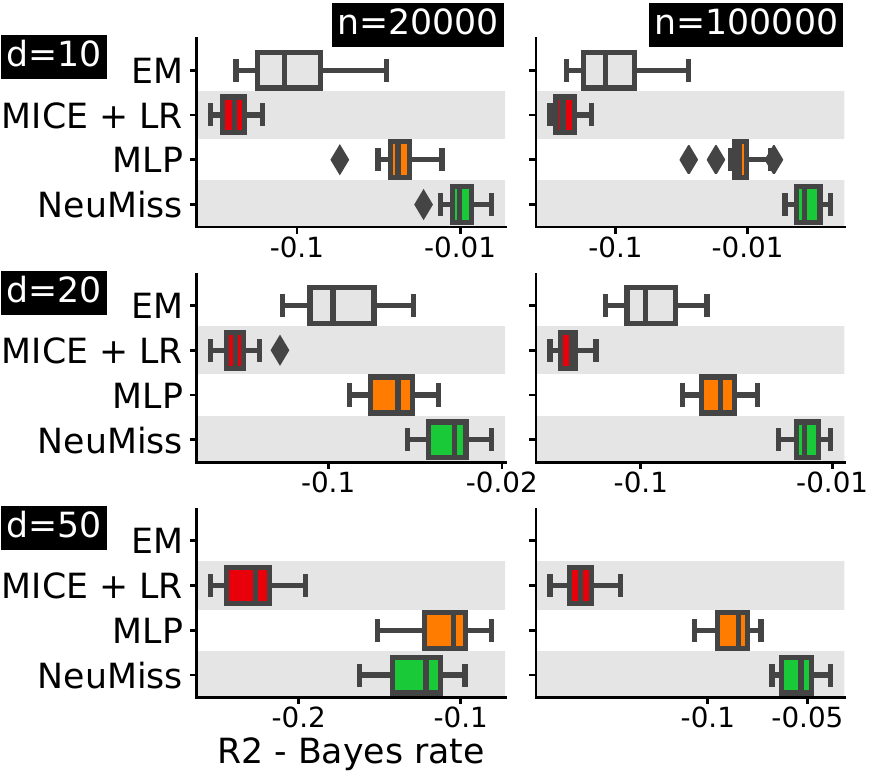}%
    \hfill%
    \includegraphics[clip, trim={1.9cm 0 0 0},
	height=.34\linewidth]{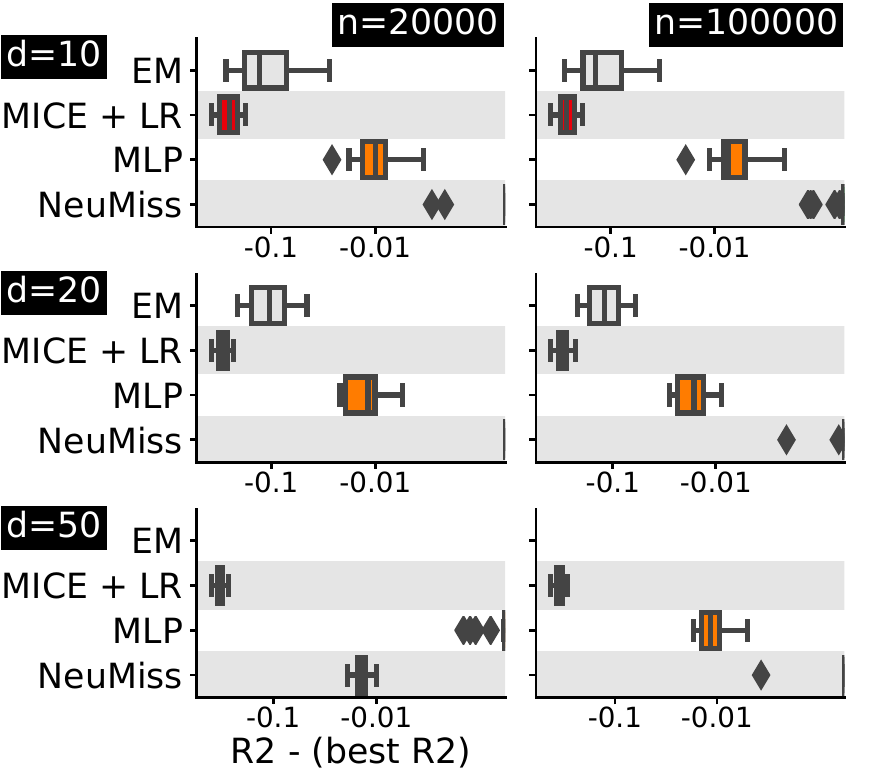}%
\caption{\textbf{Predictive performances in various scenarios} ---
varying missing-value mechanisms, number of samples $n$, and number
of features $d$. All experiments are repeated 20 times. For self-masking
settings, the x-xaxis is in log scale, to accommodate
the large difference between methods.
\label{fig:boxplotmechanism}
}
\end{figure}

% compare the predictive performances of

%Data are generated under a Gaussian assumption with a structure of correlation . Missing values are introduced on all the variables under three different mechanisms to reach 50$\%$ of missing entries.

For MCAR, MAR, and Gaussian self-masking settings, the performance is given
as the obtained R2 score minus the Bayes rate (the closer to 0 the
better), the best achievable R2 knowing the
underlying ground truth parameters. In our experiments, an estimation of the Bayes rate is obtained using the score of the Bayes predictor. For probit self-masking, as we  lack an analytical expression for the Bayes predictor, the performance is given with respect to the best performance achieved across all methods. The code to reproduce the experiments is available in GitHub \footnote{https://github.com/marineLM/NeuMiss}.

In MCAR settings, figure \ref{fig:boxplotmechanism} shows that, as
expected, EM gives the best results when tractable. Yet, we could
not run it for number of features $d \geq 50$. \name is the best
performing method behind EM, in all cases except for $n=2\times 10^{4},
d=50$, where depth of 1 or greater overfit due to the low ratio of
number of parameters to number of samples. In such situation, MLP has
the same expressive power and performs slightly better.
Note that for a high samples-to-parameters ratio
($n=1\times 10^{5}, d=10$), \name reaches an almost perfect $R
2$ score, less than 1\% below the Bayes rate. The results for the MAR setting are very similar to the MCAR results, and are given in supplementary figure \ref{fig:boxplotMAR}.

For the self-masking mechanisms, the \name network significantly
improves upon the competitors, followed by the MLP. This is even true
for the probit self-masking case for which we have no theoretical results.
The gap between the two
architectures widens as the number of samples increases, with the \name
network benefiting from a large amount of data. These results emphasize
the robustness of \name and MLP to the missing data mechanism,
including MNAR settings in which EM or conditional imputation do not
enable statistical analysis.

%\subsection{Learning a nonlinear function with missing values}

\section{Discussion and conclusion}

%Tackling the problem of supervised learning with missing values from the
%point of view of differentiable programming,
Traditionally, statistical models are adapted to missing values
using EM or imputation. However, these require strong
assumptions on the missing values. Rather, we frame the problem as a risk
minimization with a flexible yet tractable function family.
We propose the \name network, a theoretically-grounded architecture
that handles missing values
using multiplication by the mask as nonlinearities.
It targets the Bayes predictor with
differentiable approximations of the inverses of the various covariance submatrices, thereby reducing complexity by sharing parameters across missing data patterns.
Strong connections between a shallow version of our architecture and
the common practice of inputing the mask to an MLP is established.

The \name architecture has clear practical
benefits. It is robust to the missing-values mechanism, often
unknown in practice. Moreover its sample and computational complexity
are independent of the number of missing-data patterns, which allows to
work with datasets of higher dimensionality and limited sample sizes.
This work opens many perspectives, in particular using
this network as a building block in larger architectures, \emph{eg} to
tackle nonlinear problems. %Exploring alternative routes for the approximation of the inverses could also yield even more efficient architectures.

% An important contribution in practice, which goes beyond the framework of networks, is the possibility to have a scalable approach for supervised learning with incomplete covariates that is robust to missing data of different types including notoriously difficult MNAR setting.

% This work opens many perspectives such as the link with the dropout...

% This architecture can be used as a building block in all networks when facing incomplete data.

\section*{Broader Impact}

In our work, we proposed theoretical foundations to justify the use of a
specific neural network architecture in the presence of missing-values.

Neural networks are known for their challenging black-box nature. We believe that such theory leads to a better understanding of the mechanisms at work in neural networks.

Our architecture is tailored for missing data. These are present in
many applications, in particular in social or health data. In these
fields, it is common for under-represented groups to exhibit a higher
percentage of missing values (MNAR mechanism). Dealing with these missing
values will definitely improve prediction for these groups, thereby
reducing potential bias against these exact same groups.

As any predictive algorithm, our proposal can be misused in a variety of
context, including in medical science, for which a proper assessment of
the specific characteristics of the algorithm output is required
(assessing  bias in prediction, prevent false conclusion resulting from
misinterpreting outputs). Yet, by improving performance and
understanding of a fundamental challenge in many applications settings,
our work is not facilitating more unethical aspects of AI than ethical
applications. Rather, medical studies that suffer chronically from
limited sample sizes are mostly likely to benefit from the reduced sample
complexity that these advances provide.

\begin{ack}
This work was funded by ANR-17-CE23-0018 - DirtyData - Intégration et nettoyage de données pour l'analyse statistique (2017) and the MissingBigData grant from DataIA.
% Use unnumbered first level headings for the acknowledgments. All acknowledgments
% go at the end of the paper before the list of references. Moreover, you are required to declare
% funding (financial activities supporting the submitted work) and competing interests (related financial activities outside the submitted work).
% More information about this disclosure can be found at: \url{https://neurips.cc/Conferences/2020/PaperInformation/FundingDisclosure}.

% Do {\bf not} include this section in the anonymized submission, only in the final paper. You can use the \texttt{ack} environment provided in the style file to autmoatically hide this section in the anonymized submission.
\end{ack}

\small

\bibliographystyle{plainnat}
\bibliography{biblio}

\clearpage
\appendix

\toptitlebar
{\centering{\Large{\bfseries  Supplementary materials} -- \mytitle \par}}
\bottomtitlebar

\section{Proofs}

%%%%%%%%%%%%%%%%%%%%%%%%%%%%%%%%%%%%%%%%%%%%%%%%%%%%%
\subsection{Proof of Lemma~\ref{lem:bayes_predictor}}

\begin{lemma}[General expression of the Bayes predictor]
 \label{lem:bayes_predictor}
 Assume that the data are generated via the linear model defined in equation~\eqref{eq:predictor}, then the Bayes predictor takes the form
 \begin{equation}
     f^{\star} (X_{obs(M)},M) = \beta_0^{\star} + \langle \beta_{obs(M)}^{\star}, X_{obs(M)} \rangle  + \langle \beta^{\star}_{mis(M)},  \E[X_{mis(M)}|M, X_{obs(M)}] \rangle,
     \label{eq:bayespredictgeneral}
 \end{equation}
 where
 ($\beta_{obs(M)}^{\star}, \beta_{mis(M)}^{\star}$) correspond to the decomposition of the regression coefficients in observed and missing elements.
 \end{lemma}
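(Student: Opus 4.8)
The plan is to start from the fact that the Bayes predictor for the squared loss is the conditional expectation, $\regTilde(\bXm) = \E[Y \mid \bXm]$, and to use — exactly as in equation~\eqref{eq:bp_submodels} — that conditioning on $\bXm$ is equivalent to conditioning on the pair $(\bM, \bX_{obs(\bM)})$. So it suffices to compute $\E[Y \mid \bM, \bX_{obs(\bM)}]$. Substituting the linear model~\eqref{eq:predictor}, $Y = \beta_0^{\star} + \langle \bX, \beta^{\star}\rangle + \varepsilon$, and using linearity of the conditional expectation, I split this into three contributions: the constant $\beta_0^{\star}$ passes through unchanged; the noise term vanishes since $\varepsilon$ is independent of $(\bX, \bM)$ (the additive noise in $Y$ is exogenous, even in MNAR settings where $\bM$ may depend on $\bX$), so $\E[\varepsilon \mid \bM, \bX_{obs(\bM)}] = \E[\varepsilon] = 0$; and it remains to handle $\E[\langle \bX, \beta^{\star}\rangle \mid \bM, \bX_{obs(\bM)}]$.

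For the linear term, I would decompose $\langle \bX, \beta^{\star}\rangle = \langle \bX_{obs(\bM)}, \beta^{\star}_{obs(\bM)}\rangle + \langle \bX_{mis(\bM)}, \beta^{\star}_{mis(\bM)}\rangle$, where the split of $\beta^{\star}$ into its observed and missing parts is dictated by $\bM$. To make the conditioning rigorous despite the random index set $obs(\bM)$, I work pattern by pattern: write every quantity as $\sum_{\bbm \in \{0,1\}^d} (\cdots)\,\mathds{1}_{\bM = \bbm}$ as in~\eqref{eq:bp_submodels}, so that on the event $\{\bM = \bbm\}$ the observed and missing coordinate sets are fixed. On this event $\bX_{obs(\bbm)}$ is measurable with respect to the conditioning $\sigma$-algebra, hence $\E[\langle \bX_{obs(\bbm)}, \beta^{\star}_{obs(\bbm)}\rangle \mid \bM = \bbm, \bX_{obs(\bbm)}] = \langle \bX_{obs(\bbm)}, \beta^{\star}_{obs(\bbm)}\rangle$, while the deterministic coefficient vector $\beta^{\star}_{mis(\bbm)}$ can be pulled out of the expectation, giving $\E[\langle \bX_{mis(\bbm)}, \beta^{\star}_{mis(\bbm)}\rangle \mid \bM = \bbm, \bX_{obs(\bbm)}] = \langle \beta^{\star}_{mis(\bbm)}, \E[\bX_{mis(\bbm)} \mid \bM = \bbm, \bX_{obs(\bbm)}]\rangle$. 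Summing the pieces back over $\bbm$ reassembles exactly equation~\eqref{eq:bayespredictgeneral}.

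The only point requiring care — and thus the main obstacle — is the bookkeeping around the random index set $obs(\bM)$: one must ensure that the decomposition $\beta^{\star} = (\beta^{\star}_{obs(\bM)}, \beta^{\star}_{mis(\bM)})$ and the associated inner products are interpreted coordinate-wise consistently with the realization of $\bM$, which the pattern-by-pattern reduction handles cleanly. Beyond that, the proof uses only linearity of (conditional) expectation and the exogeneity of $\varepsilon$; notably no distributional assumption on $\bX$ and no assumption on the missingness mechanism is needed here, which is precisely why this lemma can be stated before Assumptions~\ref{ass:gaussian}–\ref{ass:selfmasked_gaussian} and serves as the common starting point for Propositions~\ref{prop:MCAR_MAR} and~\ref{prop:sm}, where the remaining work is to evaluate the conditional mean $\E[\bX_{mis(\bM)} \mid \bM, \bX_{obs(\bM)}]$ under the respective mechanisms.
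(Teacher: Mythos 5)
Your proof is correct and follows essentially the same route as the paper's (three-line) argument: write the Bayes predictor as $\E[Y\mid M, X_{obs(M)}]$, apply linearity of conditional expectation to the linear model, and note that the observed part is measurable while the noise has zero conditional mean. Your additional care with the exogeneity of $\varepsilon$ and the pattern-by-pattern handling of the random index set $obs(M)$ merely makes explicit what the paper leaves implicit.
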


\begin{proof}[Proof of Lemma~\ref{lem:bayes_predictor}]
By definition of the linear model, we have
\begin{align*}
    \regTilde (\bXm) & = \mathbb E[Y | \bXm]\\
    & = \mathbb E[\beta^{\star}_0+ \langle \beta^{\star},  X \rangle~|~M, X_{obs(M)}] \\
    & = \beta_0^{\star} + \langle \beta^{\star}_{obs(M)},  X_{obs(M)} \rangle + \langle \beta^{\star}_{mis(M)}, \mathbb E[X_{mis(M)}~|~M, X_{obs(M)}] \rangle.
\end{align*}
\end{proof}

%%%%%%%%%%%%%%%%%%%%%%%%%%%%%%%%%%%%%%%%%%%%%%%%%%%%%
\subsection{Proof of Lemma~\ref{lem:product}}

\begin{lemma}[Product of two multivariate gaussians] \label{lem:product}
Let $f(X) = \exp\br{(X - a)^\top A^{-1}(X-a)}$ and $g(X) = \exp\br{(X - b)^\top B^{-1}(X-b)}$ be two Gaussian functions, with $A$ and $B$ positive semidefinite matrices. Then the product $f(X)g(X)$ is another gaussian function given by:
$$f(X)g(X) = \exp\br{-\frac{1}{2} (a-b)^\top(A+B)^{-1}(a-b))} \exp\br{-\frac{1}{2} (X - \mu_p)^\top \Sigma_p^{-1} (X - \mu_p)}$$
where $\mu_p$ and $\Sigma_p$ depend on $a$, $A$, $b$ and $B$.
\end{lemma}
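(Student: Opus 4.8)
The plan is to prove this by completing the square in the exponent. Writing $f(X)g(X) = \exp\bigl(-\tfrac12 Q(X)\bigr)$ with the standard Gaussian normalization, where $Q(X) = (X-a)^\top A^{-1}(X-a) + (X-b)^\top B^{-1}(X-b)$, the whole task reduces to rewriting this sum of two quadratic forms as a single quadratic form in $X$ plus a constant that does not depend on $X$. Throughout I use that $A$, $B$, and hence $A+B$ and $A^{-1}+B^{-1}$ are positive definite, so all the relevant inverses exist.

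First I would expand $Q(X)$ and collect powers of $X$: the quadratic part is $X^\top(A^{-1}+B^{-1})X$, the linear part is $-2X^\top(A^{-1}a + B^{-1}b)$, and the $X$-independent part is $a^\top A^{-1}a + b^\top B^{-1}b$. Setting $\Sigma_p^{-1} = A^{-1}+B^{-1}$ and $\mu_p = \Sigma_p(A^{-1}a + B^{-1}b)$, completing the square gives $Q(X) = (X-\mu_p)^\top\Sigma_p^{-1}(X-\mu_p) + c$ with $c = a^\top A^{-1}a + b^\top B^{-1}b - \mu_p^\top\Sigma_p^{-1}\mu_p$. This already exhibits $f(X)g(X)$ as a Gaussian function with mean $\mu_p$ and covariance $\Sigma_p$, both depending only on $a$, $A$, $b$, $B$.

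The remaining --- and only nontrivial --- step is to identify the constant, namely to show $c = (a-b)^\top(A+B)^{-1}(a-b)$. The key tool is the matrix identity $\Sigma_p = (A^{-1}+B^{-1})^{-1} = A(A+B)^{-1}B = B(A+B)^{-1}A$, which one checks by multiplying out. Substituting $\Sigma_p = A(A+B)^{-1}B$ and cancelling yields the three elementary simplifications $A^{-1} - A^{-1}\Sigma_p A^{-1} = (A+B)^{-1}$, $B^{-1} - B^{-1}\Sigma_p B^{-1} = (A+B)^{-1}$, and $A^{-1}\Sigma_p B^{-1} = (A+B)^{-1}$. Plugging these into $c = a^\top A^{-1}a + b^\top B^{-1}b - (A^{-1}a + B^{-1}b)^\top\Sigma_p(A^{-1}a + B^{-1}b)$ collapses it to $a^\top(A+B)^{-1}a - 2\,a^\top(A+B)^{-1}b + b^\top(A+B)^{-1}b = (a-b)^\top(A+B)^{-1}(a-b)$, as claimed.

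Finally I would exponentiate: $f(X)g(X) = \exp\bigl(-\tfrac12(a-b)^\top(A+B)^{-1}(a-b)\bigr)\,\exp\bigl(-\tfrac12(X-\mu_p)^\top\Sigma_p^{-1}(X-\mu_p)\bigr)$, which is exactly the stated form. The only real obstacle is bookkeeping in step three: keeping the matrix algebra straight and making sure each manipulation is legitimate because $A$, $B$, and $A+B$ are invertible. There is no conceptual difficulty once the identity $(A^{-1}+B^{-1})^{-1} = A(A+B)^{-1}B$ is in hand.
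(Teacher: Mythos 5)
Your proof is correct and follows essentially the same route as the paper's: match the quadratic and linear terms to define $\Sigma_p^{-1}=A^{-1}+B^{-1}$ and $\mu_p$, complete the square, and then reduce the leftover constant $a^\top A^{-1}a+b^\top B^{-1}b-\mu_p^\top\Sigma_p^{-1}\mu_p$ to $(a-b)^\top(A+B)^{-1}(a-b)$ via matrix identities. The only cosmetic difference is that you derive all three simplifications from $(A^{-1}+B^{-1})^{-1}=A(A+B)^{-1}B$, whereas the paper invokes the Woodbury identity for the diagonal terms and that same identity for the cross term; these are interchangeable.
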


\begin{proof}[Proof of Lemma~\ref{lem:product}]
Identifying the second and first order terms in $X$ we get:
\begin{align}
    \Sigma_p^{-1} &= A^{-1} + B^{-1} \label{eq:mu_p}\\
    \Sigma_p^{-1} \mu_p &= A^{-1}a + B^{-1}b \label{eq:Sigma_p}
\end{align}
By completing the square, the product can be rewritten as:
\begin{equation*}
    f(X)g(X) = \exp \br{-\frac{1}{2}(a^\top A^{-1}a + b^\top B^{-1}b - \mu_p^\top \Sigma_p^{-1} \mu_p} \exp\br{-\frac{1}{2} (X - \mu_p)^\top \Sigma_p^{-1} (X - \mu_p)}
\end{equation*}
Let's now simplify the scaling factor:
\begin{align*}
    c &= a^\top A^{-1}a + b^\top B^{-1}b - \mu_p^\top \Sigma_p^{-1} \mu_p\\
    &= a^\top A^{-1}a + b^\top B^{-1}b - \br{a^\top A^{-1}(A^{-1}+B^{-1})^{-1} + b^\top B^{-1}(A^{-1} + B^{-1})^{-1}}\br{A^{-1}a + B^{-1}b}\\
    &= a^\top(A^{-1} - A^{-1}(A^{-1} + B^{-1})^{-1}A^{-1})a + b^\top(B^{-1} - B^{-1}(A^{-1} + B^{-1})^{-1}B^{-1})b \\
    & \quad - 2a^\top (A^{-1}(A^{-1}+B^{-1})^{-1}B^{-1})b\\
    &= a^\top(A+B)^{-1}a + b^\top(A+B)^{-1}b - 2a^\top(A+B)^{-1}b\\
    &= (a-b)^\top(A+B)^{-1}(a-b)
\end{align*}
The third equality is true because $A$ and $B$ are symmetric. The fourth equality uses the Woodbury identity and the fact that:
\begin{align*}
    (A^{-1}(A^{-1}+B^{-1})^{-1}B^{-1}) &= \br{B(A^{-1} + B^{-1})A}^{-1}\\
    &= \br{BA^{-1}A + BB^{-1}A}^{-1}\\
    &= \br{B + A}^{-1}
\end{align*}
The last equality allows to conclude the proof.
\end{proof}

%%%%%%%%%%%%%%%%%%%%%%%%%%%%%%%%%%%%%%%%%%%%%%%%%%%%%
\subsection{Proof of Proposition~\ref{prop:MCAR_MAR}}
\label{app:sec:proof_prop1}

\MARbp*

Lemma~\ref{lem:bayes_predictor} gives the general expression of the Bayes predictor for any data distribution and missing data mechanism. From this expression, on can see that the crucial step to compute the Bayes predictor is computing $\E[X_{mis} | M, X_{obs}]$, or in other words, $\E[X_{j} | M, X_{obs}]$ for all $j \in mis$. In order to compute this expectation, we will characterize the distribution $P(X_j|M, X_{obs})$ for all $j \in mis$. Let $mis^\prime(M, j) = mis(M)\setminus\{j\}$. For clarity, when there is no ambiguity we will just write $mis^\prime$. Using the sum and product rules of probability, we have:
\begin{align}
    P(X_j|M, X_{obs}) &= \frac{P(M, X_j, X_{obs})}{P(M, X_{obs})}\\
    &= \frac{\int P(M, X_j, X_{obs}, X_{mis^\prime}) \mathrm{d}X_{mis^\prime} }{\int \int P(M, X_j, X_{obs}, X_{mis^\prime}) \mathrm{d}X_{mis^\prime} \mathrm{d}X_j}\\
    &= \frac{\int P(M | X_{obs}, X_j, X_{mis^\prime}) P(X_{obs}, X_j, X_{mis^\prime}) \mathrm{d}X_{mis^\prime}}{\int \int P(M | X_{obs}, X_j, X_{mis^\prime}) P(X_{obs}, X_j, X_{mis^\prime})  \mathrm{d}X_{mis^\prime} \mathrm{d}X_j} \label{eq:general_factorisation}
\end{align}
In the MCAR case, for all $\bbm \in \{0,1\}^d, \P (\bM = \bbm | \bX) = \P (\bM = \bbm)$, thus we have
\begin{align}
    P(X_j|M, X_{obs}) &= \frac{P(M) \int P(X_{obs}, X_j, X_{mis^\prime}) \mathrm{d}X_{mis^\prime}}{P(M) \int \int P(X_{obs}, X_j, X_{mis^\prime}) \mathrm{d}X_{mis^\prime} \mathrm{d}X_j}\\
    &= \frac{P(X_{obs}, X_j)}{P(X_{obs})}\\
    &= P(X_j|X_{obs}) \label{eq:mcar_proba}
\end{align}
On the other hand, assuming MAR mechanism, that is, for all $\bbm \in \{0,1\}^d$, $P(\bM = \bbm|\bX) = P(\bM = \bbm|\bX_{obs(\bbm)})$, we have, given equation~\eqref{eq:general_factorisation},
\begin{align}
    P(X_j|M, X_{obs})  &= \frac{P(M |X_{obs}) \int P(X_{obs}, X_j, X_{mis^\prime}) \mathrm{d}X_{mis^\prime}}{P(M | X_{obs}) \int \int P(X_{obs}, X_j, X_{mis^\prime}) \mathrm{d}X_{mis^\prime} \mathrm{d}X_j}\\
    &= \frac{P(X_{obs}, X_j)}{P(X_{obs})}\\
    &= P(X_j|X_{obs}) \label{eq:mar_proba}
\end{align}
Therefore, if the missing data mechanism is MCAR or MAR, we have, according to equation~\eqref{eq:mcar_proba} and \eqref{eq:mar_proba},
\begin{align*}
\mathbb E[X_{mis(M)}~|~M, X_{obs(M)}] = \mathbb E[X_{mis(M)}~| X_{obs(M)}].
\end{align*}
Since $X$ is a Gaussian vector distributed as $\mathcal{N}(\mu, \Sigma)$, we know that the conditional expectation $\E[X_{mis(M)}~| X_{obs(M)}]$ satisfies
\begin{align}
    \mathbb E\left[X_{mis(m)}~\middle|~X_{obs(m)}\right]  = \mu_{mis(m)} + \Sigma_{mis(m),obs(m)} \left(\Sigma_{obs(m)}\right)^{-1}\left(X_{obs(m)}-\mu_{obs(m)}\right), \label{eq:mis_cond_obs}
\end{align}
\citep[see, e.g.,][]{majumdar2019conditional}. This concludes the proof according to Lemma~\ref{lem:bayes_predictor}.

%%%%%%%%%%%%%%%%%%%%%%%%%%%%%%%%%%%%%%%%%%%%%%%%%%%%%
\subsection{Proof of Proposition~\ref{prop:sm}}
\label{app:proof_prop_mnar}

\GaussianSMbp*

In the Gaussian self-masking case, according to Assumption~\ref{ass:selfmasked_gaussian}, the probability factorizes as $P(M =m| X) = P(M_{mis(m)}=1|X_{mis(m)}) P(M_{obs(m)}=0|X_{obs(m)})$. Equation~\ref{eq:general_factorisation} can thus be rewritten as:
\begin{align}
    P(X_j|M, X_{obs}) &=  \frac{P(M_{obs}=0 |X_{obs}) \int P(M_{mis}=1 | X_{mis}) P(X_{obs}, X_j, X_{mis^\prime}) \mathrm{d}X_{mis^\prime}}{P(M_{obs}=0 | X_{obs}) \int \int P(M_{mis}=1 | X_{mis}) P(X_{obs}, X_j, X_{mis^\prime}) \mathrm{d}X_{mis^\prime} \mathrm{d}X_j}\\
    &= \frac{\int P(M_{mis}=1 | X_{mis}) P(X_{mis}|X_{obs}) \mathrm{d}X_{mis^\prime}}{\int \int P(M_{mis}=1 | X_{mis}) P(X_{mis}|X_{obs}) \mathrm{d}X_{mis^\prime} \mathrm{d}X_j} \label{eq:sm_factorisation}
\end{align}

Let $D$ be the diagonal matrix such that $\mathrm{diag}(D) = \tsigma^2$, where $\tsigma$ is defined in Assumption~\ref{ass:selfmasked_gaussian}. Then the masking probability reads:
\begin{equation}
    \label{eq:masking_proba}
    P(M_{mis}=1 |X_{mis}) = \prod_{k \in mis}^d K_k \exp\br{-\frac{1}{2} (X_{mis} - \tmu_{mis})(D_{mis, mis})^{-1}(X_{mis} - \tmu_{mis})}
\end{equation}

Using the conditional Gaussian formula, we have $P(X_{mis} |X_{obs}) = \mathcal{N} (X_{mis}| \mu_{mis|obs}, \Sigma_{mis|obs})$ with
\begin{align}
    \mu_{mis|obs} &= \mu_{mis} + \Sigma_{mis, obs} \Sigma_{obs,  obs}^{-1} \br{X_{obs} - \mu_{obs}} \label{eq:conditional_mean} \\
    \Sigma_{mis|obs} &= \Sigma_{mis, mis} - \Sigma_{mis, obs} \Sigma_{obs}^{-1} \Sigma_{obs, mis} \label{eq:conditional_cov}
\end{align}
Thus, according to equation~\eqref{eq:masking_proba}, $P(M_{mis}=1 | X_{mis})$ and $P(X_{mis}|X_{obs})$ are Gaussian functions of $X_{mis}$. By Lemma~\ref{lem:product}, their product is also a Gaussian function given by:
\begin{equation}
    P(M_{mis}=1 | X_{mis})P(X_{mis}|X_{obs}) = K \exp\br{-\frac{1}{2} (X_{mis} - a_M)^\top \br{A_M}^{-1} (X_{mis} - a_M)}
    % \label{eq:bayespredictgeneral}
\end{equation}
where $a_M$ and $A_M$ depend on the missingness pattern and
\begin{gather}
    K = \prod_{k \in mis}^d \frac{K_k}{\sqrt{(2\pi)^{|mis|} |\Sigma_{mis|obs}|}} \exp\br{-\frac{1}{2} (\tmu_{mis}-\mu_{mis|obs})^\top(\Sigma_{mis|obs}+D_{mis, mis})^{-1}(\tmu_{mis}-\mu_{mis|obs})}\\
    \br{A_M}^{-1} = D_{mis, mis}^{-1} + \Sigma_{mis|obs}^{-1} \label{eq:A_M}\\
    \label{eq:a_M}
    \br{A_M}^{-1} a_M = D_{mis, mis}^{-1}\tmu_{mis} + \Sigma_{mis|obs}^{-1}\mu_{mis|obs}
\end{gather}

Because $K$ does not depend on $X_{mis}$, it simplifies from eq~\ref{eq:sm_factorisation}. As a result we get:
\begin{align}
    P(X_j|M, X_{obs}) &=  \frac{\int \Ncal(X_{mis}|a_M, A_M) \mathrm{d}X_{mis^\prime}}{\int \int \Ncal(X_{mis}|a_M, A_M) \mathrm{d}X_{mis^\prime} \mathrm{d}X_j}\\
    &= \mathcal{N} (X_j|(a_M)_j, (A_M)_{j,j})
\end{align}

By definition of the Bayes predictor, we have
\begin{align}
    \regTilde (\bXm) = \beta_0^{\star} + \langle \beta_{obs(M)}^{\star}, X_{obs(M)} \rangle  + \langle \beta^{\star}_{mis(M)},  \E[X_{mis(M)}|M, X_{obs(M)}] \rangle,
\end{align}
where
\begin{equation}
    \label{eq:sm_bayes_predictor}
    \E[X_{mis} | M, X_{obs}] = (a_M)_{mis}.
\end{equation}
Combining equations~\eqref{eq:A_M}, \eqref{eq:a_M}, \eqref{eq:sm_bayes_predictor}, we obtain
\begin{align}
\E[X_{mis}|M, X_{obs}] =& \br{Id + D_{mis}\Sigma_{mis| obs}^{-1}}^{-1} \\
& \times \sqb{\tilde{\mu}_{mis} + D_{mis}\Sigma_{mis| obs}^{-1}\br{\mu_{mis} + \Sigma_{mis, obs} \br{\Sigma_{obs}}^{-1}\br{X_{obs}-\mu_{obs}}}}
\label{eq:bp_gaussian_sm}
\end{align}

%%%%%%%%%%%%%%%%%%%%%%%%%%%%%%%%%%%%%%%%%%%%%%%%%%%%%
\subsection{Controlling the convergence of Neumann iterates}

Here we establish an auxiliary result, controlling the convergence of
Neumann iterates to the matrix inverse.
\label{sec:conv_neumann_iterates}

\begin{proposition}[Linear convergence of Neumann iterations]
\label{prop:conv_neumann_iterates}
    Assume that the spectral radius of $\Sigma$ is strictly less than $1$. Therefore, for all missing data patterns $m \in \{0,1\}^d$, the iterates $S^{(\ell)}_{obs(m)}$ defined in equation~\eqref{eq:iterative_algo} converge linearly towards $(\Sigma_{obs(m)})^{-1}$ and satisfy, for all $\ell \geq 1$,
    \[
        \| Id -  \Sigma_{obs(m)}S_{obs(m)}^{(\ell)}\|_2
                \le (1 - \nu_{obs(m)})^{\ell}
                    \| Id -  \Sigma_{obs(m)}S^{(0)}_{obs(m)}\|_2
        \enspace ,
    \]
    where $\nu_{obs(m)}$ is the smallest eigenvalue of $\Sigma_{obs(m)}$.
\end{proposition}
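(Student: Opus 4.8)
The plan is to track the residual matrices $R^{(\ell)} := Id - \Sigma_{obs(m)} S^{(\ell)}_{obs(m)}$ and show they obey a simple linear recursion. Fix a pattern $m \in \{0,1\}^d$ and abbreviate $A := \Sigma_{obs(m)}$ and $S^{(\ell)} := S^{(\ell)}_{obs(m)}$. First I would record the spectral properties of $A$. Since $\Sigma$ is a covariance matrix with spectral radius $\rho(\Sigma) < 1$ and smallest eigenvalue $\nu$, it is symmetric, and its principal submatrix $A$ is symmetric as well. By the Rayleigh-quotient characterization, for any unit vector $u$ supported on $obs(m)$, embedding it into $\RR^d$ as $\tilde u$ gives $u^\top A u = \tilde u^\top \Sigma \tilde u \in [\nu, \rho(\Sigma)]$; hence every eigenvalue of $A$ lies in $[\nu_{obs(m)}, \rho(\Sigma)] \subseteq (0,1)$, where $\nu_{obs(m)} := \lambda_{\min}(A) \ge \nu > 0$. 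In particular $A$ is invertible, $Id - A$ is symmetric with eigenvalues in $[1-\rho(\Sigma),\, 1 - \nu_{obs(m)}] \subset (0,1)$, and therefore $\|Id - A\|_2 = 1 - \nu_{obs(m)} < 1$.

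Next I would derive the residual recursion from the iteration $S^{(\ell)} = (Id - A) S^{(\ell-1)} + Id$. Multiplying on the left by $A$ and using that $A$ commutes with $Id - A$:
\begin{align*}
R^{(\ell)} \;=\; Id - A S^{(\ell)} \;=\; Id - A(Id-A)S^{(\ell-1)} - A \;=\; (Id-A) - (Id-A) A\, S^{(\ell-1)} \;=\; (Id - A)\, R^{(\ell-1)}.
\end{align*}
Iterating yields $R^{(\ell)} = (Id - A)^{\ell} R^{(0)}$, and taking spectral norms with submultiplicativity gives
\begin{align*}
\| Id - \Sigma_{obs(m)} S^{(\ell)}_{obs(m)}\|_2 \;=\; \|R^{(\ell)}\|_2 \;\le\; \|Id - A\|_2^{\ell}\, \|R^{(0)}\|_2 \;=\; (1 - \nu_{obs(m)})^{\ell}\, \| Id - \Sigma_{obs(m)} S^{(0)}_{obs(m)}\|_2 ,
\end{align*}
which is exactly the claimed inequality. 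To conclude linear convergence of the iterates themselves, I would write $A^{-1} - S^{(\ell)} = A^{-1}(Id - A S^{(\ell)}) = A^{-1} R^{(\ell)}$, so that $\|A^{-1} - S^{(\ell)}\|_2 \le \|A^{-1}\|_2\,\|R^{(\ell)}\|_2 \le \nu_{obs(m)}^{-1}(1 - \nu_{obs(m)})^{\ell}\,\|R^{(0)}\|_2 \to 0$ geometrically, since $1 - \nu_{obs(m)} \in (0,1)$.

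The argument is essentially routine once one notices the residual recursion, so the only point requiring care is the spectral bookkeeping: the identity $\|Id - A\|_2 = 1 - \nu_{obs(m)}$ rests on the fact that \emph{all} eigenvalues of the submatrix $\Sigma_{obs(m)}$ lie strictly between $0$ and $1$. The strict upper bound comes from the spectral-radius hypothesis through the Rayleigh-quotient (interlacing) argument, and strict positivity from $\nu>0$; without $\rho(\Sigma)<1$ the factor $\|Id-A\|_2$ need not be below $1$, and the bound would fail.
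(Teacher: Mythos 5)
Your proof is correct and follows essentially the same route as the paper's: bound the spectrum of the principal submatrix $\Sigma_{obs(m)}$ via a Rayleigh-quotient/interlacing argument, derive the one-step linear recursion $Id - \Sigma_{obs(m)}S^{(\ell)}_{obs(m)} = (Id - \Sigma_{obs(m)})(Id - \Sigma_{obs(m)}S^{(\ell-1)}_{obs(m)})$, and conclude by submultiplicativity of the spectral norm with $\|Id - \Sigma_{obs(m)}\|_2 = 1 - \nu_{obs(m)}$. The only cosmetic difference is that you obtain the residual recursion directly from the iteration using commutativity, whereas the paper first subtracts the fixed-point equation for $(\Sigma_{obs(m)})^{-1}$ and then left-multiplies by $\Sigma_{obs(m)}$; these are the same computation.
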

Note that Proposition~\ref{prop:conv_neumann_iterates} can easily be
extended to the general case by working with $\Sigma / \rho(\Sigma)$ and
multiplying the resulting approximation by $\rho(\Sigma)$, where
$\rho(\Sigma)$ is the spectral radius of $\Sigma$.

\begin{proof}

Since the spectral radius of $\Sigma$ is strictly smaller than one, the spectral radius of each submatrix $\Sigma_{obs(m)}$ is also strictly smaller than one. This is a direct application of Cauchy Interlace Theorem \citep{Hwang2004} or it can be seen with the definition of the  eigenvalues
\[
   \rho(\Sigma_{obs(m)}) = \max_{u\in\RR^{|obs(m)|}} u^\top \Sigma_{obs(m)}u = \max_{\substack{x\in\RR^d\\x_{mis} = 0}} x^\top \Sigma x \le \max_{x \in \RR^{d}} x^\top\Sigma x = \rho(\Sigma)
   \enspace .
\]
Note that $S_{obs(m)}^{\ell} = \sum_{k=0}^{\ell-1} \br{Id - \Sigma_{obs}}^k +  \br{Id - \Sigma_{obs}}^{\ell}S^0_{obs(m)}$
can be defined recursively via the iterative formula
\begin{align}
    S_{obs(m)}^{\ell} &= (Id -  \Sigma_{obs(m)}) S_{obs(m)}^{\ell-1}  + Id \label{eq:neumann_iteration}
\end{align}

The matrix $(\Sigma_{obs(m)})^{-1}$ is a fixed point of the Neumann iterations (equation~\eqref{eq:neumann_iteration}). It verifies the following equation
    \begin{equation}
         (\Sigma_{obs(m)})^{-1} = (Id -  \Sigma_{obs(m)})  (\Sigma_{obs(m)})^{-1} + Id
        \enspace . \label{eq:neumann_iteration_sigma}
    \end{equation}
    By substracting \ref{eq:neumann_iteration} to this equation, we obtain
    \begin{equation}
        (\Sigma_{obs(m)})^{-1} - S_{obs(m)}^{\ell} = (Id -  \Sigma_{obs(m)})( (\Sigma_{obs(m)})^{-1} - S_{obs(m)}^{\ell-1})
        \enspace .
    \end{equation}
    Multiplying both sides by $\Sigma_{obs(m)}$ yields
    \begin{equation}
        (Id - \Sigma_{obs(m)}S_{obs(m)}^{\ell}) = (Id -  \Sigma_{obs(m)})( Id - \Sigma_{obs(m)} S_{obs(m)}^{\ell-1})
        \enspace .
    \end{equation}
    Taking the $\ell_2$-norm and using Cauchy-Schwartz inequality yields
    \begin{equation}
        \|Id - \Sigma_{obs(m)}S_{obs(m)}^{\ell}\|_2
            \le \|Id -  \Sigma_{obs(m)}\|_2
                \|Id - \Sigma_{obs(m)}S_{obs(m)}^{\ell-1}\|_2
        \enspace .
    \end{equation}
    Let $\nu_{obs(m)}$ be the smallest eigenvalue of $\Sigma_{obs(m)}$, which is positive since $\Sigma$ is invertible. Since the largest eigenvalue of $\Sigma_{obs(m)}$ is upper bounded by $1$, we get that $\|Id - \widetilde  \Sigma\|_2 = (1 - \nu_{obs(m)})$ and by recursion we obtain
    \begin{equation}
        \| Id -  \Sigma_{obs(m)}S_{obs(m)}^{\ell}\|_2
            \le (1 - \nu_{obs(m)})^\ell
                \| Id -  \Sigma_{obs(m)}S^0_{obs(m)}\|_2 \enspace .
    \end{equation}
\end{proof}

\begin{comment}

\begin{proposition}[Linear convergence of Neumann
iterations]\gv{Suggestion: in the interest of space, move this prop in supp mat, mention it briefly
here, but focus on the one right after}
\label{prop:conv_neumann_iterates_precision}
    Assume that the spectral radius of $\Lambda$ is strictly less than $1$. Therefore, for all missing data pattern $m \in \{0,1\}^d$, the iterates of Neumann converge linearly toward $(\Lambda_{mis(m)})^{-1}$ \emph{i.e.}
    \[
        \|S_{mis(m)}^{p} -  (\Lambda_{mis(m)})^{-1}\|_2
            \le (1 - \nu_{mis(m)})^p
                \| Id -  (\Lambda_{mis(m)})^{-1}\|_2
        \enspace ,
    \]
    where $\nu_{mis(m)}$ is the smallest eigenvalue of $\Lambda_{mis(m)}$.
\end{proposition}

\end{comment}

%%%%%%%%%%%%%%%%%%%%%%%%%%%%%%%%%%%%%%%%%%%%%%%%%%%%%
\subsection{Proof of Proposition~\ref{prop:bayes_bound_approx}}

    \approxBayes*

    According to Proposition~\ref{prop:MCAR_MAR} and the definition of the approximation of order $p$ of the Bayes predictor (see equations~\eqref{eq:bp_MCAR_any_order})
    \[
        \regTildel (\bXm) =  \langle \beta^\star_{obs},  \bX_{obs} \rangle + \langle \beta^\star_{mis}, \mu_{mis} +  \Sigma_{mis, obs} S^{(\ell)}_{obs} \br{X_{obs}-\mu_{obs}} \rangle
        \enspace ,
    \]
    Then
    \begin{align}
        & \E[(\regTildel(\bXm) - \regTilde(\bXm))^2]\\
                & =  \E\Big[ \big\langle \beta_{mis}^{\star} ~,~
                    \Sigma_{mis, obs}(S^{\ell}_{obs} - \Sigma_{obs}^{-1})
                    (X_{obs} - \mu_{obs}) \big\rangle ^2\Big]\\
                & =  \E\Big[(\beta_{mis}^{\star})^\top
                    \Sigma_{mis, obs}(S^{\ell}_{obs} - \Sigma_{obs}^{-1})
                    (X_{obs} - \mu_{obs})(X_{obs} - \mu_{obs})^\top
                    (S^{\ell}_{obs} - \Sigma_{obs}^{-1})\Sigma_{obs,mis}
                    \beta_{mis}^{\star}\Big]\\
                & =  \E\Big[(\beta_{mis}^{\star})^\top
                    \Sigma_{mis, obs}(S^{\ell}_{obs} - \Sigma_{obs}^{-1})
                    \underbrace{\E[(X_{obs} - \mu_{obs})(X_{obs} - \mu_{obs})^\top|M]}_{\Sigma_{obs}}
                    (S^{\ell}_{obs} - \Sigma_{obs}^{-1})\Sigma_{obs, mis}
                    \beta_{mis}^{\star}\Big]\\
                & =  \E\Big[(\beta_{mis}^{\star})^\top
                    \Sigma_{mis, obs}
                    (S^{\ell}_{obs} - \Sigma_{obs}^{-1})
                    \Sigma_{obs}
                    (S^{\ell}_{obs} - \Sigma_{obs}^{-1})
                    \Sigma_{obs,mis}
                    \beta_{mis}^{\star}\Big]\\
                % & =  \E\Big[(\beta_{mis}^{\star})^\top
                %     \Sigma_{mis, obs}
                %     (S^{\ell}_{obs}\Sigma_{obs} - Id_{obs})
                %     \Sigma_{obs}^{-1}\Sigma_{obs}\Sigma_{obs}^{-1}
                %     (S^{\ell}_{obs}\Sigma_{obs}^{-1} - Id_{obs})\Sigma_{obs}^{-1}
                %     \Sigma_{obs,mis}
                %     \beta_{mis}^{\star}\Big]\\
                % & =  \E\Big[\big\|(\Sigma_{obs})^{-\frac12}
                %     (S^{\ell}_{obs}\Sigma_{obs} - Id_{obs})
                %     \Sigma_{obs, mis}
                %     \beta_{mis}^{\star}\big\|_2^2\Big]\\
                & =  \E\Big[\big\|(\Sigma_{obs})^{\frac12} (\Sigma_{obs})^{-1}
                    (\Sigma_{obs}S^{\ell}_{obs} - Id_{obs})
                    \Sigma_{obs, mis}
                    \beta_{mis}^{\star}\big\|_2^2\Big]\\
                & =  \E\Big[\big\|(\Sigma_{obs})^{-\frac12}
                    (Id_{obs} - \Sigma_{obs}S^{\ell}_{obs})
                    \Sigma_{obs, mis}
                    \beta_{mis}^{\star}\big\|_2^2\Big]\label{eq:cvg:sigma_obs}\\
                & \le \|\Sigma^{-1}\|_2 \|\Sigma\|_2^2\|\beta^{\star}\|_2^2
                    \E\big[\|Id_{obs} - \Sigma_{obs}S^{\ell}_{obs} \|_2^2\big]\label{eq:cvg:sigma}\\
                & \le \frac{1}{\nu} \|\beta^{\star}\|_2^2
                    \E\big[(1 - \nu_{obs})^{2\ell}\|Id_{obs} - \Sigma_{obs}S^{0}_{obs} \|_2^2\big]
    \end{align}
    An important point for going from \eqref{eq:cvg:sigma_obs} to \eqref{eq:cvg:sigma} is to notice that for any missing pattern, we have
    \begin{align*}
        \|\Sigma_{obs,mis}\|_2 \le \|\Sigma\|_2 \text{ and }  \|\Sigma_{obs}^{-1}\|_2 \le \|\Sigma^{-1}\|_2
        \enspace .
    \end{align*}
    The first inequality can be obtained by observing that computing the largest singular value of $\Sigma_{obs,mis}$ reduces to solving a constrained version of the maximization problem that defines the largest eigenvalue of $\Sigma$:
    \begin{align*}
        \|\Sigma_{obs,mis}\|_2
            = \max_{\|x_{mis}\|_2 = 1} \|\Sigma_{obs,mis} x_{mis}\|_2
            \le \max_{\substack{\|x\|_2 = 1\\x_{obs}=0}} \|\Sigma_{obs, \cdot} x\|_2
            \le \max_{\substack{\|x\|_2 = 1\\x_{obs}=0}} \|\Sigma x\|_2
            \le \max_{\|x\|_2 = 1} \|\Sigma x\|_2^2 =\|\Sigma\|_2
            \enspace .
    \end{align*}
    where we used $\|\Sigma_{obs, \cdot} x\|_2^2 = \sum_{i \in obs} (\Sigma_i^\top x)^2 \le \sum_{i=1}^d (\Sigma_i^\top x)^2 = \|\Sigma x\|_2^2$.\\
    A similar observation can be done for computing the smallest eigenvalue
    of $\Sigma$, $\lambda_{\min}(\Sigma)$:
    \[
        \lambda_{\min}(\Sigma)
            = \min_{\|x\|_2 = 1} x^\top\Sigma x
            \le \min_{\substack{\|x\|_2 = 1\\x_{mis}=0}} x^\top\Sigma x
            = \min_{\|x_{obs}\|_2 = 1} x^\top_{obs}\Sigma_{obs} x_{obs}
            = \lambda_{\min}(\Sigma_{obs})
            \enspace .
    \]
and we can deduce the second inequality by noting that $\lambda_{\min}(\Sigma) = \frac{1}{\| \Sigma^{-1}\|_2^2}$ and $\lambda_{\min}(\Sigma_{obs}) = \frac{1}{\| \Sigma_{obs}^{-1}\|_2^2}$.

\subsection{Proof of Proposition~\ref{prop:link_MLP_Neumann}}

\eqNeumannMLP*

\paragraph{Obtaining a $\odot M$ nonlinearity from a ReLU nonlinearity.}
Let $\mathcal{H}_{ReLU} = \br{\sqb{W^{(X)}, W^{(M)}} \in \RR^{d \times 2d}, ReLU}$ be a hidden layer which connects $\sqb{X, M}$ to $d$ hidden units, and applies a ReLU nonlinearity to the activations. We denote by $b \in \RR^d$ the bias corresponding to this layer. Let $k \in \bbr{1, d}$. Depending on the missing data pattern that is given as input, the $k^{th}$ entry can correspond to either a missing or an observed entry. We now write the activation of the $k^{th}$ hidden unit depending on whether entry $k$ is observed or missing. The activation of the $k^{th}$ hidden unit is given by
\begin{align}
    a_k &= W_{k, .}^{(X)} X + W_{k, .}^{(M)} M + b_k\\
    &=  W_{k, obs}^{(X)} X_{obs} + W_{k, mis}^{(M)} \mathbf{1}_{mis} + b_k. \label{eq:proof_1}
\end{align}
Emphasizing the role of $W_{k,k}^{(M)}$ and $W_{k,k}^{(X)}$, we can decompose equation~\eqref{eq:proof_1} depending on whether the $k^{th}$ entry is observed or missing
\begin{align}
    \label{eq:activation_R}
    \text{If } k \in mis, \quad a_k &= W_{k, obs}^{(X)} X_{obs} + W_{k, k}^{(M)} + W^{(M)}_{k, mis\setminus\{k\}} \mathbf{1}_{k, mis\setminus\{k\}} + b_k\\
    \text{If } k \in obs, \quad a_k &= W_{k, k}^{(X)} X_k + W_{k, obs\setminus\{k\}}^{(X)} X_{obs\setminus\{k\}} + W^{(M)}_{k, mis} \mathbf{1}_{mis} + b_k.
\end{align}

Suppose that the weights $W^{(X)}$ as well as $W^{(M)}_{i, j}, i\ne j$ are fixed. Then, under the assumption that the support of $X$ is finite, there exists a bias $b^*_k$ which verifies:
\begin{equation}
\label{eq:negative_activation}
    \forall X, \quad a_k = W_{k, k}^{(X)} X_k + W_{k, obs\setminus\{k\}}^{(X)} X_{obs\setminus\{k\}} + W^{(M)}_{k, mis} \mathbf{1}_{mis} + b^*_k \leq 0
\end{equation}
i.e., there exists a bias $b^*_k$ such that the activation of the $k^{th}$ hidden unit is always negative when $k$ is observed. Similarly, there exists $W_{k, k}^{*, (M)}$ such that:
\begin{equation}
\label{eq:positive_activation}
    \forall X, \quad a_k = W_{k, obs}^{(X)} X_{obs} + W_{k, k}^{*, (M)} + W^{(M)}_{k, mis\setminus\{k\}} \mathbf{1}_{k, mis\setminus\{k\}} + b^*_k \geq 0
\end{equation}
i.e., there exists a weight $W_{k, k}^{*, (M)}$ such that the activation of the $k^{th}$ hidden unit is always positive when $k$ is missing. Note that these results hold because the weight $W_{k, k}^{(M)}$ only appears in the expression of $a_k$ when entry $k$ is missing. Let $h_k = ReLU(a_k)$. By choosing $b_k=b^*_k$ and $W_{k, k}^{(M)} = W_{k, k}^{*, (M)}$, we have that:
\begin{align}
    \text{If } k \in mis, \quad h_k &= a_k\\
    \text{If } k \in obs, \quad h_k &= 0
\end{align}
As a result, the output of the hidden layer $\mathcal{H}_{ReLU}$ can be rewritten as:
\begin{equation}
    h_k = a_k \odot M
\end{equation}
i.e., a $\odot M$ nonlinearity is applied to the activations.

\paragraph{Equating the slopes and biases of $\mathcal{H}_{ReLU}$ and $\mathcal{H}_{\odot M}$.}
Let $\mathcal{H}_{\odot M} = \br{W \in \RR^{d \times d}, \mu, \odot M}$ be the layer that connect $(X-\mu)\odot(1-M)$ to $d$ hidden units via the weight matrix $W$, and applies a $\odot M$ nonlinearity to the activations. We will denote by $c \in \RR^d$ the bias corresponding to this layer.

The activations for this layer are given by:
\begin{align}
    a_k &= W_{k, obs}(X_{obs} - \mu_{obs}) + c_k\\
    \label{eq:activation_N}
    &= W_{k, obs}X_{obs} - W_{k, obs}\mu_{obs} + c_k
\end{align}

Then by applying the non-linearity we obtain the output of the hidden layer:
\begin{align}
    \text{If } k \in mis, \quad h_k &= a_k\\
    \text{If } k \in obs, \quad h_k &= 0
\end{align}

It is straigthforward to see that with the choice of $b_k=b^*_k$ and $W_{k, k}^{(M)} = W_{k, k}^{*, (M)}$ for $\mathcal{H}_{ReLU}$, both hidden layers have the same output $h_k = 0$ when entry $k$ is observed. It remains to be shown that there exists a configuration of the weights of $\mathcal{H}_{ReLU}$ such that the activations $a_k$ when entry $k$ is missing are equal to those of $\mathcal{H}_{\odot M}$. To avoid confusions, we will now denote by $a
^{(N)}_k$ the activations of $\mathcal{H}_{\odot M}$ and by $a^{(R)}_k$ the activations of $\mathcal{H}_{ReLU}$. We recall here the activations for both layers as derived in \ref{eq:activation_N} and \ref{eq:activation_R}.
\begin{equation}
\text{If } k \in mis,
    \begin{cases}
        a_k^{(N)} = W_{k, obs}X_{obs} - W_{k, obs}\mu_{obs} + c_k\\
        a_k^{(R)} = W_{k, obs}^{(X)} X_{obs} + W_{k, k}^{*, (M)} + W^{(M)}_{k, mis\setminus\{k\}} \mathbf{1}_{k, mis\setminus\{k\}} + b_k^*
    \end{cases}
\end{equation}

By setting $W_{k, .}^{(X)} = W_{k, .}$, we obtain that both activations have the same slopes with regards to $X$. We now turn to the biases. We have that:
\begin{align}
    W_{k, k}^{*, (M)} + W^{(M)}_{k, mis\setminus\{k\}} \mathbf{1}_{k, mis\setminus\{k\}} + b_k^* = W_{k, .}^{(M)} \mathbf{1} - W^{(M)}_{k, obs}\mathbf{1} + b_k^*
\end{align}
We now set:
\begin{align}
    \label{eq:biais1}
    \forall j \in obs,\quad &W^{(M)}_{kj} = W_{kj}\mu_j\\
    \label{eq:biais2}
    &W_{k.}^{(M)}\mathbf{1} + b_k^* = c_k
\end{align}
to obtain that both activations have the same biases. Note that \ref{eq:biais1} sets the weights $W_{k, j}$ for all $j \ne k$ (since $obs$ can contain any entries except $k$). As a consequence, equation \ref{eq:biais2} implies an equation invloving $W_{kk}^{*, (M)}$ and $b_k^*$ where all other parameters have already been set. Since $W_{kk}^{*, (M)}$ and $b_k^*$ are also chosen to satisfy the inequalities \ref{eq:negative_activation} and \ref{eq:positive_activation}, it may not be possible to choose them so as to also satify equation~\ref{eq:biais2}. As a result, the functions computed by the activated hidden units of $\mathcal{H}_{ReLU}$ can be equal to those computed by $\mathcal{H}_{\odot M}$ up to a constant.

%%%%%%%%%%%%%%%%%%%%%%%%%%%%%%%%%%%%%%%%%%%%%%%%%%%%%
\section{Additional results}

\subsection{\name network scaling law in MNAR}

\begin{figure}[h!]\center
    \includegraphics[height=.35\linewidth]{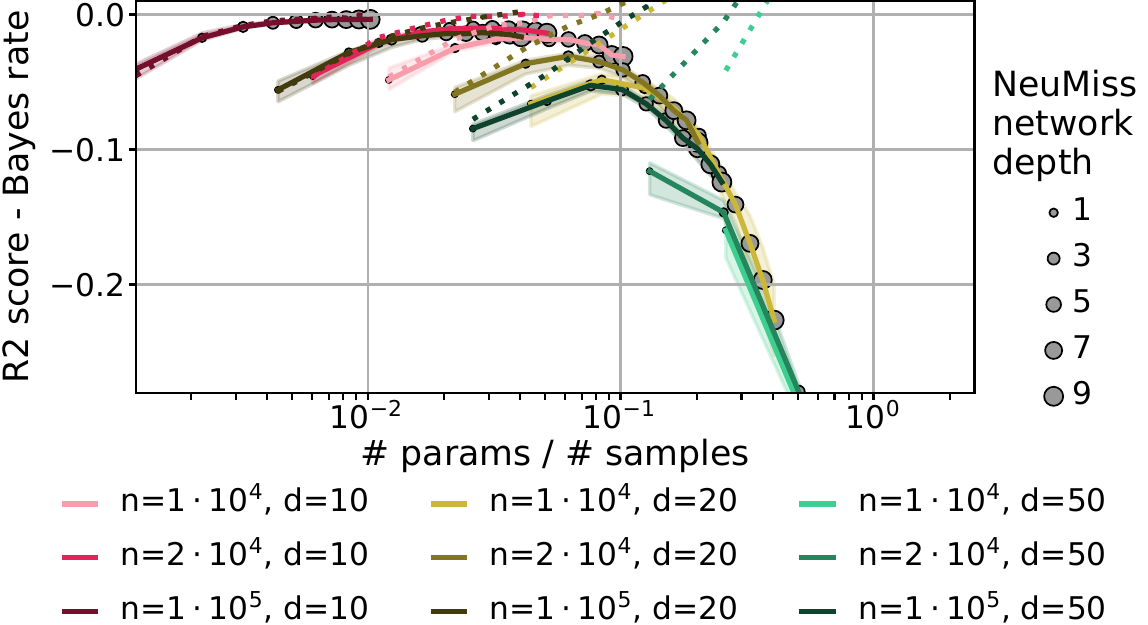}%
    \llap{\raisebox{.15\linewidth}{\sffamily Gaussian self-masking}\qquad\qquad\qquad\qquad}%
    \medskip

    \includegraphics[height=.35\linewidth]{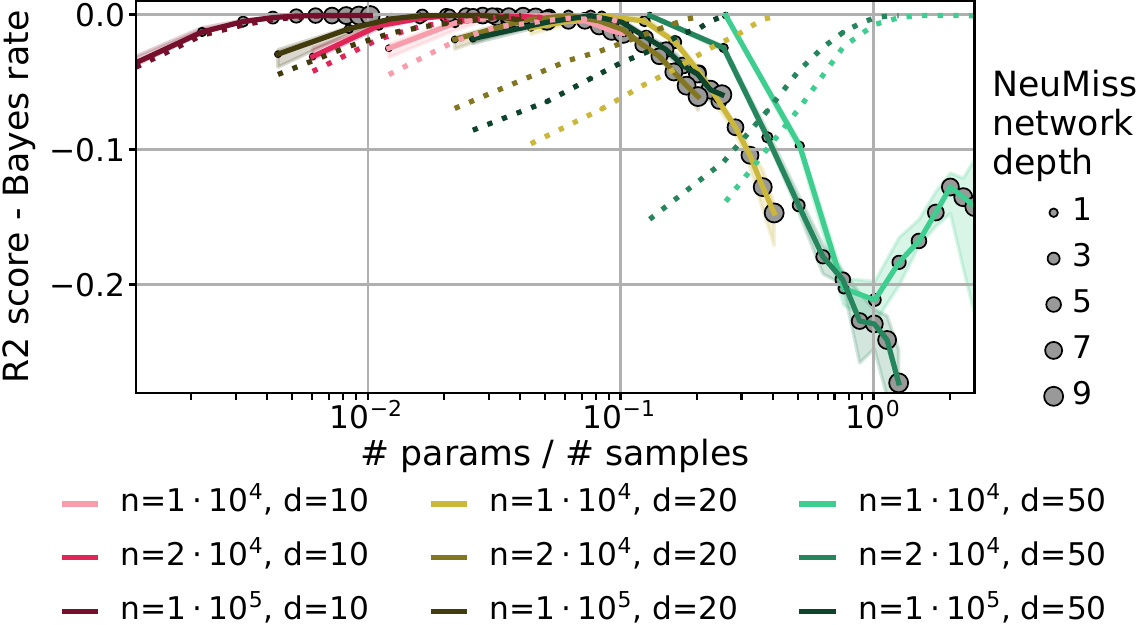}%
    \llap{\raisebox{.15\linewidth}{\sffamily Probit self-masking}\qquad\qquad\qquad\qquad}%

\caption{
\textbf{Required capacity in various MNAR settings} ---
Top: Gaussian self-masking, bottom: probit self-masking.
Performance of \name networks varying the depth in simulations with
different number of samples $n$ and of features $d$.
\label{fig:different_neumann_mnar}}
\end{figure}

\subsection{\name network performances in MAR}

The MAR data was generated as follows: first, a subset of variables with \emph{no} missing values is randomly selected (10\%). The remaining variables have missing values according to a logistic model with random weights, but whose intercept is chosen so as to attain the desired proportion of missing values on those variables (50\%). As can be seen from figure~\ref{fig:boxplotMAR}, the trends observed for MAR are the same as those for MCAR.

\begin{figure}[h!]
    \centering
    MAR\\
    \includegraphics[scale=0.6]{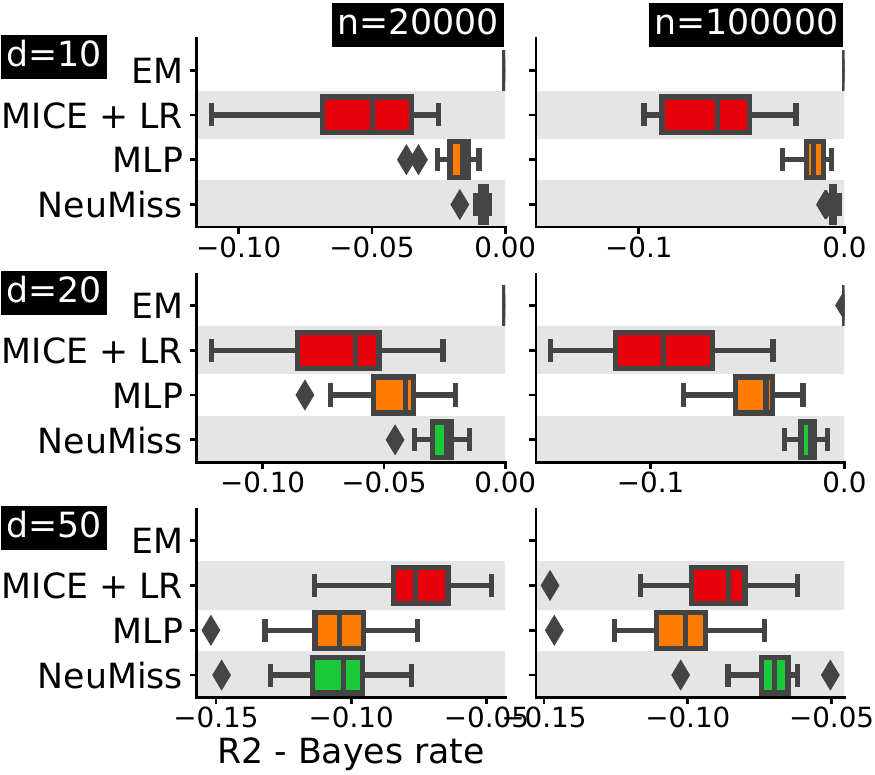}
    \caption{\textbf{Predictive performances in MAR scenario} ---
varying number of samples $n$, and number of features $d$. All experiments are repeated 20 times.}
    \label{fig:boxplotMAR}
\end{figure}

\end{document}